\author{
Enric Boix-Adsera$^*$ \\ MIT, Harvard \and Neil Mallinar$^*$ \\ UCSD \and James B. Simon \\ Imbue, UC Berkeley \and Mikhail Belkin \\ UCSD
}
\newcommand{\pd}[2]{\frac{\partial #1}{\partial #2}} 
\let\baraccent=\= %
\renewcommand{\=}[1]{\stackrel{#1}{=}} %
\providecommand{\cD}{\mathcal{D}}
\providecommand{\cL}{\mathcal{L}}
\providecommand{\cX}{\mathcal{X}}
\providecommand{\cY}{\mathcal{Y}}
\mathchardef\mhyphen="2D %
\newcommand{\interior}[1]{%
  {\kern0pt#1}^{\mathrm{o}}%
}
\colorlet{linkequation}{blue}
\newcommand{\E}{\mathbb{E}}
\newcommand{\cN}{\mathcal{N}}
\newcommand{\R}{\mathbb{R}}
\newcommand{\<}{\langle}
\renewcommand{\>}{\rangle}
\newcommand{\sgn}{\mathrm{sgn}}
\newtheoremstyle{myremark} %
    {\topsep}                    %
    {\topsep}                    %
    {\rm}                        %
    {}                           %
    {\bf}                        %
    {.}                          %
    {.5em}                       %
    {}  %
\DeclareSymbolFont{rsfs}{U}{rsfs}{m}{n}
\DeclareSymbolFontAlphabet{\mathscrsfs}{rsfs}
\def\cL{{\mathcal L}}
\def\cD{{\mathcal D}}
\def\cX{{\mathcal X}}
\def\cX{{\mathcal X}}
\numberwithin{equation}{section}
\newenvironment{fminipage}%
  {\begin{Sbox}\begin{minipage}}%
  {\end{minipage}\end{Sbox}\fbox{\TheSbox}}
\theoremstyle{plain}
\newtheorem{theorem}{Theorem}[section]
\newtheorem{proposition}[theorem]{Proposition}
\newtheorem{lemma}[theorem]{Lemma}
\theoremstyle{definition}
\theoremstyle{remark}
\newtheorem{remark}[theorem]{Remark}
\newcommand{\agop}{\mathsf{AGOP}}
\newcommand{\nfa}{\mathsf{NFA}}
\newcommand{\bfact}{\mathsf{bFACT}}
\newcommand{\enfa}{\mathsf{eNFA}}
\newcommand{\fact}{\mathsf{FACT}}
\newcommand{\bagop}{\mathsf{bAGOP}}
\newcommand{\cU}{\mathcal{U}}
\title{The Features at Convergence Theorem: a first-principles alternative to the Neural Feature Ansatz for how networks learn representations}
\begin{document}
\maketitle
\begin{abstract}
It is a central challenge in deep learning to understand how neural networks learn representations. A leading approach is the Neural Feature Ansatz (NFA) \cite{radhakrishnan2024mechanism}, a conjectured mechanism for how feature learning occurs. Although the NFA is empirically validated, it is an educated guess and lacks a theoretical basis, and thus it is unclear when it might fail, and how to improve it. In this paper, we take a first-principles approach to understanding why this observation holds, and when it does not. We use first-order optimality conditions to derive the Features at Convergence Theorem (FACT), an alternative to the NFA that (a) obtains greater agreement with learned features at convergence, (b) explains why the NFA holds in most settings, and (c) captures essential feature learning phenomena in neural networks such as grokking behavior in modular arithmetic and phase transitions in learning sparse parities, similarly to the NFA. Thus, our results unify theoretical first-order optimality analyses of neural networks with the empirically-driven NFA literature, and provide a principled alternative that provably and empirically holds at convergence.
\end{abstract}

\section{Introduction}

A central aim of deep learning theory is to understand how neural networks learn representations. An empirically-driven conjecture that has recently emerged as to the mechanism driving feature learning in neural networks is the Neural Feature Ansatz (NFA) \cite{radhakrishnan2024mechanism}. This conjecture has been validated in practice on a range of architectures, including fully-connected networks, convolutional networks, and transformers \cite{radhakrishnan2024mechanism}.

A growing literature has shown that this NFA conjecture captures and explains several intriguing phenomena of neural network training, including grokking of modular arithmetic \citep{mallinar2024emergence}, learning of hierarchical staircase functions \cite{zhu2025iteratively}, and catapult spikes during training \cite{zhu2023catapults}. Additionally, when used to power an adaptive kernel learning algorithm, it achieves state-of-the-art performance for monitoring models \cite{beaglehole2025aggregate}, for learning tabular datasets \cite{radhakrishnan2024mechanism}, and for low-rank matrix learning \cite{radhakrishnan2025linear}.

Despite its success, the NFA conjecture lacks first-principles backing for why it should necessarily hold during training. Because this conjecture was derived in an empirical fashion, it is unclear why it ought to hold, whether and under which conditions it may fail, and how to improve it. This motivates the main question studied by this paper:

\begin{center}
\textit{Is there an alternative to the empirically-observed Neural Feature Ansatz conjecture, which can be derived from first principles?}
\end{center}

We answer main this question in the affirmative. \textbf{Our main contribution is to demonstrate a connection between the empirically-observed NFA conjecture and the literature studying first-order optimality conditions that must provably hold if the training process converges.} These first-order optimality conditions had previously been shown to have far-reaching implications to neural networks, including low-rank bias \cite{gunasekar2017implicit}, neural collapse \cite{han2021neural,kothapalli2022neural}, and grokking \cite{morwani2023feature}; see Section~\ref{sec:related} for more references.

Thus, our results unify two prominent approaches to studying feature learning (the NFA and first-order optimality). In more detail, our contributions are: 
\begin{itemize}
\item[(1)] \textbf{We derive a simple alternative to the NFA conjecture based on first-order optimality conditions}. We call this the \textit{FACT (Features at Convergence Theorem)}. This is a self-consistency formula that neural networks trained with weight decay must satisfy at convergence; see Section~\ref{sec:fact-derivation}.

\item[(2)] \textbf{We empirically demonstrate that our first-principles alternative captures neural network feature learning phenomena in many of the same ways that the NFA conjecture does.} We show that when FACT (instead of NFA) is used to power an adaptive kernel learning algorithm \cite{radhakrishnan2024mechanism}, it also reproduces intriguing feature learning behaviors observed in neural networks such as training phase transitions when learning sparse parities \cite{barak2022hidden,abbe2023sgd}, grokking of modular arithmetic \cite{nanda2023progress,gromov2023grokking}, and high performance on tabular data matching the state-of-the-art \cite{radhakrishnan2024mechanism}; see Section~\ref{sec:fact-reconstructs-behaviors}.

\item[(3)] \textbf{We provide a derivation for why the NFA conjecture usually holds based on first-order optimality.} 
By algebraically expanding the FACT relation, and analyzing the terms, we demonstrate that it is qualitatively similar to the conjectured NFA relation. We empirically demonstrate that the two relations are proportional in the case of modular arithmetic. This helps put the NFA conjecture on firm theoretical foundation by connecting it to provable first-order optimality conditions, and elucidates the mystery of why it usually holds; see Section~\ref{sec:nfa-explanation}.

\item[(4)] \textbf{We construct degenerate training settings in which the NFA conjecture is provably false but where first-order optimality conditions hold true.} We formally prove and experimentally observe that in certain settings the NFA predictions can be nearly uncorrelated to the ground truth, while FACT and any other relations based on first-order-optimality conditions still hold. This indicates that the latter may provide a more accurate relation at convergence; see Section~\ref{sec:fact-separation} as well the discussion in Section~\ref{sec:discussion}.

\end{itemize}

\subsection{Related literature}\label{sec:related}

\paragraph{Implications of first-order-optimality in neural networks}

The literature on first-order optimality conditions of networks at convergence -- along with results on KKT conditions that arise with exponentially-tailed losses at large training times 
\cite{soudry2018implicit,ji2019implicit,lyu2019gradient,ji2020directional} -- has been used to show implicit bias of deep architectures towards low rank \cite{gunasekar2017implicit,arora2019implicit,galanti2022sgd}, of diagonal networks towards sparsity \cite{woodworth2020kernel}, of convolutional networks towards Fourier-sparsity \cite{gunasekar2018implicit}, of fully-connected networks towards algebraic structure when learning modular arithmetic \cite{mohamadi2023grokking,morwani2023feature}, among other implications such as to linear regression with bagging \cite{stewart2023regression}, understanding adversarial examples \cite{frei2024double}, and neural collapse \cite{han2021neural,kothapalli2022neural,zangrando2024neural}.

\paragraph{Analyses of training dynamics} Another recently prevalent approach to understanding feature learning is to study neural network training dynamics – tracking weight evolution to understand how features emerge \cite{olsson2022context,edelman2024evolution,nichani2024transformers,cabannes2023scaling,cabannes2024learning,arous2021online,abbe2022merged,abbe2023sgd,kumar2023grokking}. While insightful, these analyses are technically challenging and are typically limited to synthetic  datasets. In this paper, we pursue an alternative approach: we seek conditions on network weights that are satisfied {\it at the conclusion of training}, to gain insight into how the trained network represents the learned function.  By focusing on the network state at convergence, we can circumvent many of the difficulties associated with analyzing training dynamics, and the resulting insights are directly applicable to real-world data beyond simplified synthetic settings. 

\paragraph{Equivariant NFA} Another alternative to the NFA, called the ``equivariant NFA'' (eNFA), was recently proposed in  \cite{ziyin2025formationrepresentationsneuralnetworks} based on an analysis of the dynamics of noisy SGD. This is distinct from the FACT and we also compare to it in Section~\ref{sec:fact-reconstructs-behaviors}.

\section{Training setup and background}\label{sec:preliminaries}

\paragraph{Training setup} We consider the standard training setup, with a model $f(\cdot;\theta) : \cX \to \R^c$ trained on a sample-wise loss function $\ell : \R^c \times \cY \to \R$ on data points $(x_i,y_i)_{i \in [n]}$ with $L^2$ regularization parameter $\lambda > 0$ (that is, with non-zero weight decay). The training loss is $\cL_{\lambda}(\theta) = \cL(\theta) + \frac{\lambda}{2} \|\theta\|_F^2\,, \mbox{ where } 
\cL(\theta) = \frac{1}{n} \sum_{i=1}^n \ell(f(x_i; \theta), y_i)$. Here $\cX$ and $\cY$ are the input and output domains.

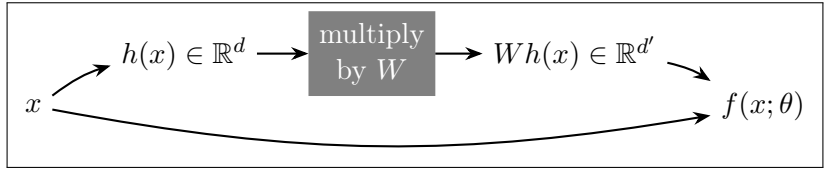
\begin{figure} %
    \centering
    \vspace{-1em}
    \fbox{\begin{tikzpicture}[node distance=0.4cm, auto, >=Stealth, thick, scale=1] %
        \node (x) at (0,0) {$x$};
        \node (hx) at (2,0.7) {$h(x) \in \mathbb{R}^d$};
        \node (multiply) at (4.5,0.7) [draw=gray, fill=gray, rectangle, minimum width=1cm, minimum height=1.1cm, align=center, text=white] {multiply \\ by $W$};
        \node (Whx) at (7.2,0.7) {$Wh(x) \in \mathbb{R}^{d'}$};
        \node (fx) at (9.7,0) {$f(x;\theta)$};

        \draw[->, bend left=10] (x) to (hx);
        \draw[->] (hx) -- (multiply);
        \draw[->] (multiply) -- (Whx);
        \draw[->, bend left=10] (Whx) to (fx);
        \draw[->, bend right=10] (x) to (fx);
    \end{tikzpicture}}
    \caption{The model only depends on $W$ through multiplication of activations $h(x)$.}\label{fig:weightinmodel}
\end{figure}
Our FACT applies to any weight matrix parameter $W \in \R^{d' \times d}$ inside a trained model. The only architectural requirement is that the model only depends on $W$ via matrix multiplication of internal activations. See Figure~\ref{fig:weightinmodel}. Formally, fixing all parameters but $W$, there are functions $g,h$ such that for all $x$,
\begin{align}\label{eq:weight-matrix-in-nn}
f(x;\theta) = g(Wh(x), x)\,.
\end{align}

In this notation,\footnote{More precisely, including the dependence on the parameters other than $W$, what this means is that we can partition the parameters as $\theta = [W,\theta_{-W}]$, and $f(x;\theta) = g(Wh(x;\theta_{-W}); x; \theta_{-W})$.} $h$ is the input to the weight matrix, and $Wh$ is the output. Thus, FACT applies to any layer in neural networks that involves matrix multiplications.

For convenience, we introduce the notation to denote the gradient of the loss and the value of the model \textit{with respect to the input of the layer} containing the weight matrix $W$, at the data point $x_i$:
\begin{align*}
\nabla_{h} \ell_i := \pd{\ell(g(Wh,x);y_i)}{h} \mid_{h = h(x_i)} \in \R^{d} \quad \mbox{ and } \quad \nabla_{h} f_i := \pd{g(Wh,x)}{h} \mid_{h = h(x_i)}\in \R^{d \times c}\,.
\end{align*}

\paragraph{Neural Feature Ansatz.} In the above notation, the NFA \cite{radhakrishnan2024mechanism} posits that the neural feature matrix $W^{\top} W$ is proportional to the influence that the different subspaces of the input have on the output, which is captured by the Average Gradient Outer Product ($\agop$) matrix. Namely, there is a power $s > 0$ such that
\begin{align}
W^{\top} W \propto (\agop)^s\,, \mbox{ where } \agop := \frac{1}{n} \sum_{i=1}^n (\nabla_{h} f_i) (\nabla_{h} f_i)^{\top}\,. \tag{NFA} \label{eq:nfa}
\end{align}
\paragraph{Equivariant Neural Feature Ansatz.} We will also compare to the eNFA proposed in \cite{ziyin2025formationrepresentationsneuralnetworks}, which states
\begin{align}
 W^{\top} W \propto \enfa := \frac{1}{n} \sum_{i=1}^n (\nabla_{h} \ell_i)(\nabla_{h} \ell_i)^{\top}\,. \tag{eNFA} \label{eq:enfa}
\end{align}

\section{Neural features satisfy FACT at convergence}\label{sec:fact-derivation}

In contrast to the empirically-derived NFA and eNFA, we seek to provide a relation derived from first principles. We proceed from the following simple observation: at a critical  point of the loss, the features $h(x)$ are weighted by their influence on the final loss. This is stated in the following theorem.

\begin{theorem}[Features at Convergence Theorem]\label{thm:fact}
If the parameters of the model are at a critical point of the loss with respect to $W$, then
\begin{align}%
W^{\top} W &= \fact := -\frac{1}{n \lambda} \sum_{i=1}^n (\nabla_{h} \ell_i) (h(x_i))^{\top}\,.\tag{FACT}\label{eq:fact}\end{align}
\end{theorem}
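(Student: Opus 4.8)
The plan is to read off the theorem directly from the first-order stationarity condition for the block $W$. Since the regularized objective is $\cL_{\lambda}(\theta) = \cL(\theta) + \frac{\lambda}{2}\|\theta\|_F^2$ and we may partition $\theta = [W,\theta_{-W}]$, being at a critical point with respect to $W$ means $\nabla_W \cL_{\lambda}(\theta) = 0$, i.e. $\nabla_W \cL(\theta) + \lambda W = 0$. Because $\lambda > 0$, this rearranges to $W = -\tfrac{1}{\lambda}\nabla_W \cL(\theta)$, and the entire content of the theorem is then to compute $\nabla_W \cL$ and rewrite it in the claimed form.

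Second, I would compute $\nabla_W \cL = \tfrac1n\sum_{i=1}^n \nabla_W\, \ell(g(Wh(x_i),x_i),y_i)$ via the chain rule. Writing $z_i := Wh(x_i) \in \R^{d'}$ for the output of the layer and $\nabla_z \ell_i := \partial \ell(g(z,x_i),y_i)/\partial z \mid_{z = z_i}$ for the loss gradient with respect to that output, the standard matrix-calculus identity for a linear map ($\partial (Wh)_k / \partial W_{ab} = \delta_{ka}h_b$) gives $\nabla_W\, \ell(g(Wh(x_i),x_i),y_i) = (\nabla_z \ell_i)\, h(x_i)^{\top}$. Hence $W = -\tfrac{1}{n\lambda}\sum_{i=1}^n (\nabla_z \ell_i)\, h(x_i)^{\top}$.

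Third, I multiply this identity on the left by $W^{\top}$, obtaining $W^{\top}W = -\tfrac{1}{n\lambda}\sum_{i=1}^n (W^{\top}\nabla_z \ell_i)\, h(x_i)^{\top}$. The last step is to recognize that $W^{\top}\nabla_z \ell_i$ is exactly $\nabla_h \ell_i$, the loss gradient with respect to the layer's \emph{input}: since $z = Wh$, a further application of the chain rule gives $\partial\ell/\partial h = W^{\top}(\partial\ell/\partial z)$. Substituting yields $W^{\top}W = -\tfrac{1}{n\lambda}\sum_{i=1}^n (\nabla_h \ell_i)\, h(x_i)^{\top} = \fact$, as claimed.

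I do not expect a genuine obstacle here; the whole argument is a two-line consequence of stationarity plus the chain rule, and the points requiring care are purely bookkeeping: (i) the hypothesis is a critical point with respect to the block $W$ only, so nothing is assumed about $\theta_{-W}$; (ii) one must keep the gradient with respect to the layer output $Wh(x_i)$ distinct from the gradient $\nabla_h \ell_i$ with respect to the layer input $h(x_i)$, the two being related by left-multiplication by $W^{\top}$; and (iii) $\lambda \neq 0$ is essential for the division by $\lambda$, which is precisely why nonzero weight decay is needed. The computation is identical regardless of the output dimension $c$ since $\ell$ is scalar-valued, and it holds at any critical point rather than only at a global minimum.
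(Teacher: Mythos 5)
Your proposal is correct and follows essentially the same route as the paper's proof: stationarity in $W$, the chain rule to express $\nabla_W \cL$ as $\frac{1}{n}\sum_i (\nabla_{Wh}\ell_i)h(x_i)^{\top}$, left-multiplication by $W^{\top}$, and the identification $W^{\top}\nabla_{Wh}\ell_i = \nabla_h \ell_i$. The only cosmetic difference is that you first isolate $W = -\frac{1}{\lambda}\nabla_W\cL$ before multiplying by $W^\top$, whereas the paper multiplies the stationarity equation by $W^\top$ at the outset; these are the same computation.
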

\begin{proof}
The premise of the theorem implies that $\nabla_W \cL_{\lambda}(\theta) = 0$, since $W$ is a subset of the model parameters. By left-multiplying by $W^{\top}$ and using the chain rule, we obtain
\allowdisplaybreaks\begin{align*}
0 &= W^{\top} (\nabla_W \cL_{\lambda}(\theta)) \\
&= W^{\top} (\lambda W + \nabla_W \cL(\theta)) \\
&= W^{\top} (\lambda W + \frac{1}{n} \sum_{i=1}^n \left(\pd{\ell(g(\tilde{h});y_i)}{\tilde{h}} \mid_{\tilde{h} = Wh(x_i)}\right)h(x_i)^{\top} \\
&= \lambda W^{\top} W + \frac{1}{n} \sum_{i=1}^n (\nabla_h \ell_i) h(x_i)^{\top}\,.
\end{align*}
The theorem follows by rearranging and dividing by $\lambda$.
\end{proof}

Theorem~\ref{thm:fact} is a straightforward modification of the stationarity conditions. Nevertheless, as we argue in the remainder of this paper, it is a useful quantity to consider when studying feature learning in neural networks, and it is especially fruitful when viewed as a first-principles counterpart to the NFA. Before proceeding with applications of \eqref{eq:fact}, a few remarks and empirical validation are in order.

\begin{remark}[Symmetrizations of FACT] \label{remark:symmetrization} While $W^{\top} W$ is p.s.d., the quantity $\fact$ is only guaranteed to be p.s.d. at critical points of the loss. This means that we can exploit the symmetries of the left-hand-side of \eqref{eq:fact} to get several other identities at convergence. For instance, since $W^{\top} W = (W^{\top} W)^{\top}$, we may conclude that at the critical points of the loss $W^{\top} W = \fact^{\top}$ also holds. Similarly, using that $W^{\top} W = \sqrt{(W^{\top} W) (W^{\top} W)^{\top}}$, we may also conclude that at critical points $W^{\top}W = \sqrt{\fact \cdot \fact^{\top}}$ also holds. 
\end{remark}

\begin{remark}[Empirical validation on real-world data]
In Figure~\ref{fig:5layer_depth_corr}, we verify FACT on 5-layer ReLU MLPs trained until convergence on MNIST \cite{lecun1998mnist} and CIFAR-10 \cite{krizhevsky2009learning} with Mean Squared Error loss and weight decay $10^{-4}$. We find that, at convergence, the two sides of the \eqref{eq:fact} relation are generally more highly correlated than those of the \eqref{eq:nfa} and \eqref{eq:enfa} relations. For hyperparameter details, see Appendix~\ref{app:empirical-validation-hyperparameters}.
\end{remark}

\begin{figure}[t]
  \centering
\includegraphics[width=0.8\linewidth]{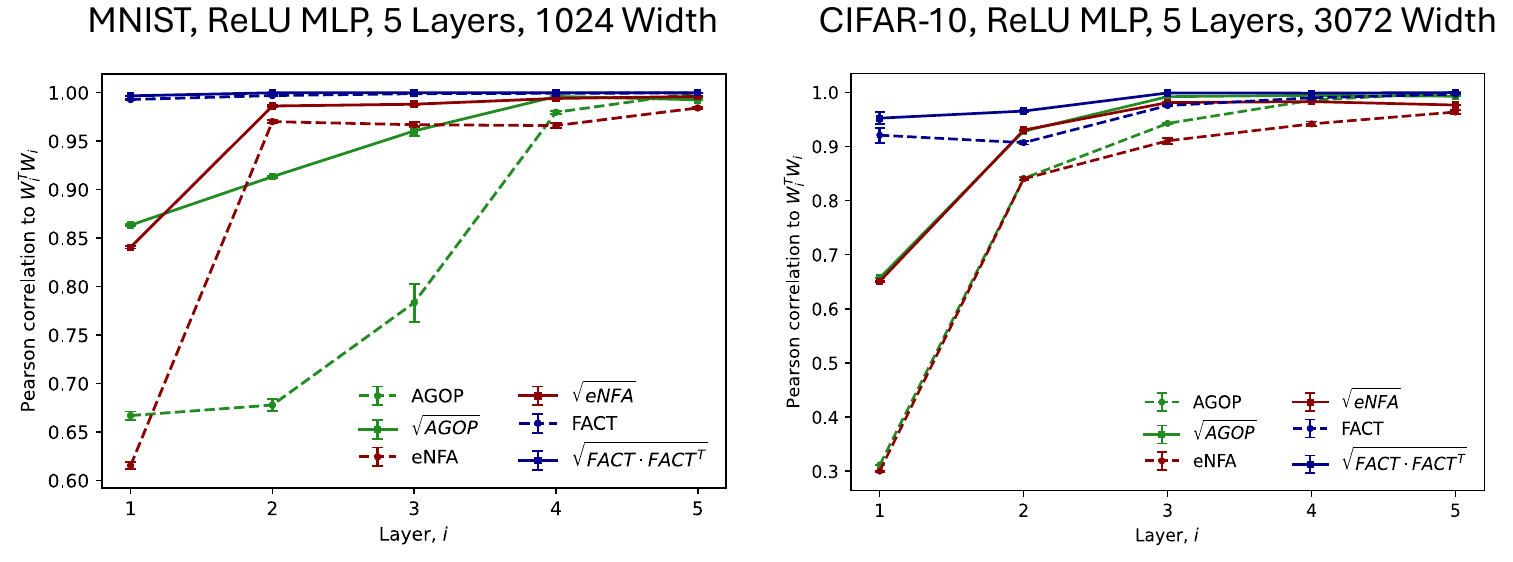}
  \caption{We train 5 hidden layer ReLU MLPs to interpolation on MNIST and CIFAR-10. We plot Pearson correlation of $\fact, \agop, \enfa$ (with respect to each hidden layer input) to $W^TW$ for that layer. Curves are averaged over 5 independent runs. Both sides of the \eqref{eq:fact} are highly-correlated at convergence across layers.}
  \label{fig:5layer_depth_corr}
\end{figure}

\begin{remark}[Backward form]
There is also an analogous ``backward'' version of this equation, \eqref{eq:bfact}, derived and empirically validated in Appendix~\ref{app:bfact-complete}, that yields information about the left singular vectors of $W$ rather than the right singular vectors. Letting $\nabla_{Wh} \ell_i$ denote the gradient of the loss with respect to the output of the layer at data point $x_i$, we have
\begin{align}
WW^{\top} = \bfact := -\frac{1}{n\lambda} \sum_{i=1}^n (Wh(x_i))(\nabla_{Wh} \ell_i)^{\top} \,.\tag{bFACT}
\end{align}
\end{remark}

\section{FACT captures feature learning phenomena in many of the same ways as the NFA conjecture}\label{sec:fact-reconstructs-behaviors}

Having validated the FACT, we now turn to applications. We show that our first-principles FACT captures many feature learning phenomena in the same ways that the empirically-driven NFA conjecture has been previously shown to do. First, we show that the FACT can be used to design learning algorithms that achieve high performance on tabular data based on adapting the recursive feature machine (RFM) algorithm of \cite{radhakrishnan2024mechanism}. 
We also show that this algorithm recovers important feature learning phenomena commonly studied in neural networks, such as phase transitions in sparse parity learning, and grokking of modular arithmetic.

\subsection{Background: Recursive Feature Machines}
The Recursive Feature Machine (RFM) algorithm \cite{radhakrishnan2024mechanism} builds upon classical kernel methods \cite{scholkopf2002learning}, which rely on a kernel function $K(x,x')$ to measure data point similarity (e.g., Gaussian, Laplace). While kernel methods have been successful, they can be provably less sample-efficient than alternatives like neural networks that are able to learn features \cite{abbe2022merged,damian2022neural}.

To address these limitations, RFM learns a linear transformation $W \in \mathbb{R}^{d \times d}$ and applies a standard kernel $K$ to the transformed data: $K_W(x,x') = K(Wx,Wx')$. This learned $W$ enables RFM to identify salient features, akin to feature learning in neural networks (for example, if $W$ is low rank, its range contains the salient features while the orthogonal complement to its range contains the irrelevant features). Seeking to imitate the feature learning behavior in neural networks, \cite{radhakrishnan2024mechanism} iteratively updates $W$ using a fixed-point iteration to satisfy the NFA condition. This is given in Algorithm~\ref{alg:rfm}, where the update equation on line~\ref{line:rfm-update} is given by
\begin{align}\label{eq:rfm-update-nfa}
W_{t+1} \gets (\agop_t)^{s/2}, \mbox{ where } \agop_t = \frac{1}{n} \sum_{i=1}^n (\nabla_x \hat{f}_t)(\nabla_x \hat{f}_t)^{\top};\,~~s>0. \tag{NFA-RFM update}
\end{align}

\begin{algorithm}

    \caption{Recursive Feature Machine (based on NFA \cite{radhakrishnan2024mechanism} or FACT (ours))}\label{alg:rfm}
\begin{algorithmic}[1]
   \STATE {\bfseries Input:} Training data $(X,y)$, kernel $K_W$, number of iterations $T$, ridge-regularization $\lambda \geq 0$
   
   \STATE Initialize $W_0 \gets I_{d \times d}$
   \FOR{$t=0$ {\bfseries to} $T$}
   \STATE Run kernel method: $\alpha_t \gets (K_{W_t}(X,X) + n\lambda I)^{-1} y$
   \STATE Let $\hat{f}_t(x) := K_{W_t}(x,X) \alpha_t$ be the kernel predictor
   \STATE Update $W_t$, either with \eqref{eq:rfm-update-nfa} or \eqref{eq:rfm-update-wagop-nogeom} \label{line:rfm-update}
   \ENDFOR
   
   \STATE {\bfseries Output:} predictor $\hat{f}_T(x)$ \\
\end{algorithmic}
\end{algorithm}

\subsection{FACT-based recursive feature machines} 

We study RFM with a FACT-based update instead of an NFA-based update. Similarly to the above, let $\fact_t$ be the FACT matrix corresponding to iteration $t$. We symmetrize in order to ensure that the update is p.s.d. Our FACT-based fixed-point iteration in line~\ref{line:rfm-update} of RFM is thus
\begin{align}\label{eq:rfm-update-wagop-nogeom}
W_{t+1} \gets ((\fact_t)(\fact_t)^{\top})^{1/4}\,. \tag{FACT-RFM update}
\end{align}
We also study a variant of this update where we average geometrically with the previous iterate to ensure greater stability (which helps for the modular arithmetic task). This geometric averaging variant has the following update
\begin{align}\label{eq:rfm-update-wagop}
W_{t+1} \gets ((\fact_t) (W_t^{\top} W_t) (W_t^{\top} W_t) (\fact_t)^{\top})^{1/8}\,. \tag{FACT-RFM update'}
\end{align}
The exponents in these updates are chosen so that the fixed points of these updates coincide with the FACT relation derived for networks at convergence in Theorem~\ref{thm:fact}. See Appendix~\ref{app:fact-rfm-update} for more details.

\subsection{Experimental results comparing FACT-RFM to NFA-RFM}\label{sec:factrfm-reconstructs-behaviors}

We compare FACT-RFM to NFA-RFM across a range of settings (tabular datasets, sparse parities, and modular arithmetic).

\paragraph{Tabular datasets.}

The authors of \cite{radhakrishnan2024mechanism} obtain state-of-the-art results using NFA-RFM on tabular benchmarks including that of \cite{tabular_121} which utilizes 121 tabular datasets from the UCI repository. We run their same training and cross-validating procedure using FACT-RFM, and report results in Table~\ref{tab:uci_tabular_benchmark}. We find that FACT-RFM obtains roughly the same high accuracy performance as NFA-RFM. Both of these feature-learning methods improve over the next-best method found by \cite{radhakrishnan2024mechanism}, which is kernel regression with the Laplace kernel without any feature learning.

\begin{table}
\centering
\begin{tabular}{@{}lcccc@{}}
\toprule
\textit{Method} & \begin{tabular}{c} FACT-RFM \\ (no geom. averaging)\end{tabular} & \begin{tabular}{c} FACT-RFM \\ (geom. averaging)\end{tabular} & NFA-RFM & Kernel regression \\
\midrule
\textit{Accuracy (\%)} & 85.22 & 84.99 & 85.10 & 83.71 \\
\bottomrule
\end{tabular}
\vspace{0.5em}
\caption{Average test accuracy over 120 datasets from the UCI corpus \cite{tabular_121}. We compare Laplace kernel regression with adaptively learned Laplace kernels using FACT and NFA, as well as no feature learning.}
\label{tab:uci_tabular_benchmark}
\end{table}

\paragraph{Sparse parities.}

We train FACT-RFM and NFA-RFM on the problem of learning sparse parities and find that both recover low-rank features. The problem of learning sparse parities has attracted attention with respect to feature learning dynamics of neural networks on multi-index models \citep{edelman2023paretofrontiersneuralfeature,abbe2023sgd}.

For training data we sample $n$ points in $d$-dimensions as $x \sim \{-\frac{1}{\sqrt{d}}, \frac{1}{\sqrt{d}}\}^d$. 
We experiment with sparsity levels of $k=2, 3, 4$ by randomly sampling $k < d$ coordinate indices with which to construct our labels.
Labels, $y$, are obtained from the product of the elements at each of the $k$ coordinates in the corresponding $x$ point and set to be 0 if the product is negative and 1 if the product is positive.
We sample a held-out test set of $1000$ points in the same manner.

We use the Mahalanobis Gaussian kernel in both FACT-RFM (with geometric averaging) and NFA-RFM with bandwidth 5 and train for 5 iterations.
Our experiments use $d=50$ and for $k=1, 2$ we take $n=500$, for $k=3$ we take $n=5000$, and for $k=4$ we take $n=50000$.
The results of these experiments are given in Figure \ref{fig:sparse_parity_rfm}. We observe that both NFA-RFM and FACT-RFM learn this task and the features learned by both methods are remarkably similar and on the support of the sparse parity. Additionally, in Figure \ref{fig:leader-parities} we show a phase transition in learning sparse parities when we take a smaller amount of data $n=25000, k=4$, which mimics the phase transition observed when training an MLP.

\begin{figure}
    \centering
\begin{tabular}{c}
\includegraphics[width=1.0\linewidth]{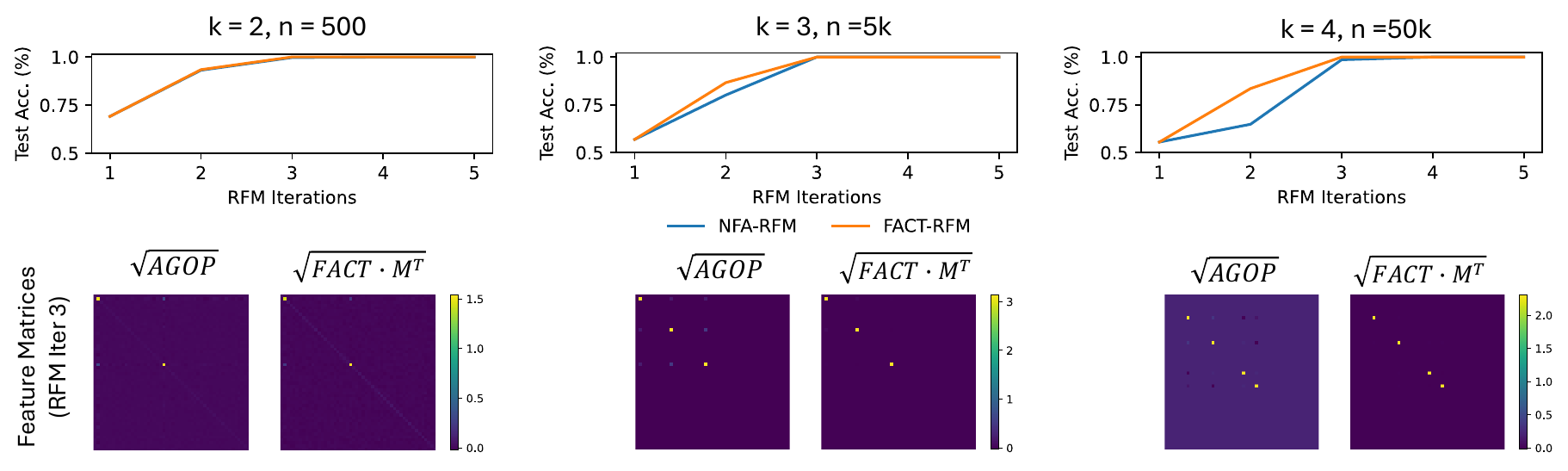} \\
\end{tabular}
    \caption{We train FACT-RFM and NFA-RFM using the Mahalanobis Gaussian kernel on sparse parity tasks. We train with $d=50, k=2, 3, 4$. The corresponding $\sqrt{\agop}$ and $\sqrt{\fact \cdot M^\top}$ feature matrices are very similar and learn the support of the sparse parity.}
    \label{fig:sparse_parity_rfm}
\end{figure}

\begin{figure}
    \centering
    \begin{tabular}{c@{}c@{}c}
    \includegraphics[width=0.45\linewidth, trim={0pt 140pt 30pt 0pt}, clip]{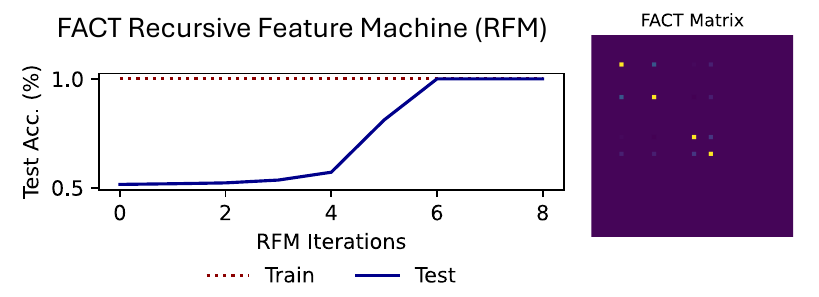} & \includegraphics[width=0.45\linewidth, trim={0pt 0pt 30pt 140pt}, clip]{figs/teaser_fig_sparse_parity1_redo.pdf} &
    \includegraphics[width=0.038\linewidth, trim={390pt 70pt 0pt 70pt}, clip]{figs/teaser_fig_sparse_parity1_redo.pdf}
    \end{tabular}

    \caption{In the lower data regimes of $n = 25000$, $k = 4$, and $d = 50$, for sparse parity, the FACT-RFM algorithm reproduces phase transitions found in training neural networks.}\label{fig:leader-parities}
    \end{figure}\begin{figure}
  \begin{center}
    \includegraphics[width=0.9\textwidth]{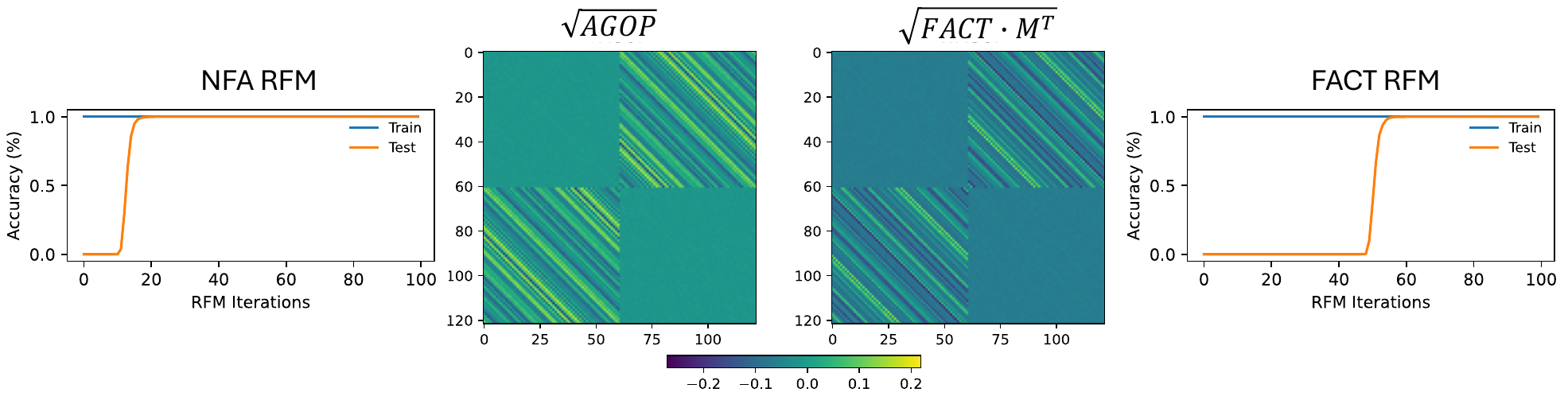}
  \end{center}
  \caption{We train FACT-RFM and NFA-RFM on $(x+y)\mod 61$ for 75 iterations. Both methods achieve 100\% test accuracy and exhibit delayed generalization aligned to the ``grokking'' phenomenon. We plot the square root of $\fact \cdot M^{\top}$ and $\agop$ and find that both methods learn block circulant feature transforms.}
  \label{fig:agop_wagop_grokking}
\end{figure}

\paragraph{Grokking modular arithmetic.}
Mallinar et al. \cite{mallinar2024emergence} recently showed that NFA-RFM exhibits delayed generalization phenomena on modular arithmetic tasks, also referred to as ``grokking".
The authors find that the square root of $\agop$ learns block circulant feature transformations on these problems.
We train FACT-RFM (with geometric averaging) on the same modular arithmetic tasks and observe the same behavior.

Figure \ref{fig:agop_wagop_grokking} shows the square root of $\agop$ and $\fact \cdot M^{\top}$ after achieving 100\% test accuracy on modular addition with modulus $p = 61$ when training on 50\% of the data and testing on the other half.
The feature matrices show block circulant structures. In keeping with \cite{mallinar2024emergence}, it seems that generic random circulant matrices are sufficient for generalization, since $\fact$ and $\agop$ learn distinct feature transforms to solve the same task when trained using the same train/test split.

\section{Comparison of NFA and FACT for inner-product kernels}\label{sec:nfa-explanation}

Having demonstrated that the first-principles FACT obtains many of the same feature learning phenomena as the empirically-conjectured NFA, it is natural to ask: is there a direct connection between thse two relations? Does the FACT imply the NFA? 

Our findings in this section suggest that there is indeed such a connection: the updates of NFA-RFM are proxies for the updates of FACT-RFM. Thus, the NFA-RFM algorithm can also be viewed as attempting to minimize the loss of the kernel method, regularized by the norm of the weights $\|W\|_F^2$. A similar claim was previously made in \cite{gan2024hyperbfs}, but the theoretical evidence provided was limited to the dynamics with one sample. Our analysis applies to training with more than one sample.

We restrict our analysis to inner-product kernels. The expressions for $\fact$ and $\agop$ simplify considerably, as stated below. Below, we let $\alpha$ be first-order optimal dual weights for kernel regression with $\lambda$-ridge regularization computed in the RFM algorithm. 
\begin{proposition}[Comparison of $\fact$ and $\agop$ for inner-product kernels]\label{prop:simplified-agop-lgop}
Suppose the kernel is an inner-product kernel of the form $K_W(x,x') = k(x^{\top} M x')$, where $M = W^{\top} W$. Then, we may write the $\agop$ and the $\fact$ matrices explicitly as:
\begin{align*}%
\agop &= \sum_{i,j=1}^n \tau(x_i,M,x_j) \cdot M x_i \alpha_i^{\top} \alpha_j x_j^{\top} M^{\top}\,, \\
\fact \cdot M^{\top} &= \sum_{i,j=1}^n k'(x_i^{\top} M x_j) \cdot M x_i \alpha_i^{\top} \alpha_j x_j^{\top} M^{\top}\,, 
\end{align*}
where $\tau(x_i,M,x_j) := \frac{1}{n} \sum_{l=1}^n k'(x_l^{\top} M x_i) k'(x_l^{\top} M x_j)$.
\end{proposition}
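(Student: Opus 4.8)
The plan is to directly compute the gradients $\nabla_x \hat f_t$ appearing in the definitions of $\agop$ and $\fact$ for the special case of an inner-product kernel $K_W(x,x') = k(x^\top M x')$ with $M = W^\top W$, and then expand the outer products and collect terms. Recall that $\hat f_t(x) = K_{W}(x,X)\alpha = \sum_{l=1}^n \alpha_l\, k(x^\top M x_l)$, so by the chain rule $\nabla_x \hat f_t(x) = \sum_{l=1}^n \alpha_l\, k'(x^\top M x_l)\, M x_l$ (using that $M$ is symmetric, so $\nabla_x (x^\top M x_l) = M x_l$). First I would write down this gradient, evaluated at a training point $x_i$, as $\nabla_x \hat f_t(x_i) = \sum_{l=1}^n \alpha_l\, k'(x_l^\top M x_i)\, M x_l$.

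For the $\agop$ expression, substitute this into $\agop = \frac1n\sum_{i=1}^n (\nabla_x \hat f_i)(\nabla_x \hat f_i)^\top$. This gives a double sum over indices $l, l'$ inside, so altogether $\agop = \frac1n \sum_{i=1}^n \sum_{l,l'} \alpha_l \alpha_{l'} k'(x_l^\top M x_i) k'(x_{l'}^\top M x_i)\, (M x_l)(M x_{l'})^\top$. Relabel the outer summation index $i \to$ nothing and the inner pair $l, l' \to i, j$; pulling the $\frac1n\sum$ over the former-$i$ (now a dummy, call it $\ell$) inside and recognizing it as $\tau(x_i, M, x_j) = \frac1n\sum_{\ell=1}^n k'(x_\ell^\top M x_i)k'(x_\ell^\top M x_j)$ yields $\agop = \sum_{i,j=1}^n \tau(x_i,M,x_j)\, M x_i \alpha_i^\top \alpha_j x_j^\top M^\top$ (here $\alpha_i$ is scalar in the single-output case, so $\alpha_i^\top\alpha_j = \alpha_i\alpha_j$; the transpose notation is for the vector-output generalization).

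For the $\fact$ expression, recall from Theorem~\ref{thm:fact} (specialized to the RFM setting where the "layer" is the linear map $x \mapsto Wx$, so $h(x) = x$ and the relevant gradient is $\nabla_h \ell_i$) that $\fact = -\frac1{n\lambda}\sum_i (\nabla_h \ell_i)\, x_i^\top$. One then needs to identify $\nabla_h \ell_i$ for kernel ridge regression: using the first-order optimality of the dual weights $\alpha$, the residual-type quantity $-\frac1{n\lambda}(\nabla_h \ell_i)$ unwinds into $\sum_j k'(x_i^\top M x_j)(M x_j)\alpha_j\alpha_i$ — this is the step that requires care, and it is the main obstacle: one must correctly relate the loss-gradient-through-the-layer to the kernel derivative and the optimal dual weights, being careful about which variable the kernel derivative is taken in and how the symmetrization by $M^\top$ on the right enters. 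Granting this, $\fact\cdot M^\top = \sum_{i,j} k'(x_i^\top M x_j)\, M x_i \alpha_i^\top \alpha_j x_j^\top M^\top$, matching the claim. I would close by noting the structural parallel: both matrices are of the form $\sum_{i,j} c_{ij}\, Mx_i\,\alpha_i^\top\alpha_j\, x_j^\top M^\top$, differing only in the scalar weights $c_{ij}$ — $\tau(x_i,M,x_j)$ versus $k'(x_i^\top M x_j)$ — which is exactly the comparison the surrounding discussion wants to highlight.
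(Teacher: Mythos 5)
Your proposal follows essentially the same route as the paper's proof: compute $\nabla_x \hat f(x) = \sum_l \alpha_l\, k'(x^\top M x_l)\, M x_l$, square it and relabel to get $\agop$ with the $\tau$ weight, and for $\fact$ start from the definition, apply the chain rule, and invoke first-order optimality of the dual weights $\alpha$ to turn $-\frac{1}{n\lambda}\ell'(\hat f(x_i),y_i)$ into $\alpha_i$. The $\agop$ half is complete and correct (your relabeling of $i,l,l'$ and the absorption of $\tfrac1n$ into $\tau$ is exactly what the paper does). Your $\fact$ half has the right structure and even arrives at the right intermediate formula, but you explicitly flag the key step — ``the residual-type quantity $-\frac1{n\lambda}(\nabla_h\ell_i)$ unwinds into $\sum_j k'(x_i^\top M x_j)(Mx_j)\alpha_j\alpha_i$'' — with ``Granting this,'' i.e.\ you do not actually derive it.

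That step is the one genuine gap. The paper closes it in two moves. First, a chain-rule rewrite: with $\hat f(x)=\sum_j K_W(x,x_j)\alpha_j$ and $h(x)=x$, one has $\nabla_h \ell_i = \bigl(\partial_x \hat f(x)\big|_{x=x_i}\bigr)\,\ell'(\hat f(x_i),y_i) = \sum_j \bigl(\partial_x K_W(x,x_j)\big|_{x=x_i}\bigr)\,\alpha_j^\top\,\ell'(\hat f(x_i),y_i)$. Second, a short closed-form computation for MSE ridge regression: since $\alpha = (K+n\lambda I)^{-1}Y$, we get $\ell'(\hat y_i,y_i)=\hat y_i - y_i = [K\alpha - Y]_i = [(K-(K+n\lambda I))\alpha]_i = -n\lambda\,\alpha_i$, hence $\alpha_i = -\tfrac1{n\lambda}\ell'(\hat f(x_i),y_i)$. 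Substituting these two in and then using $\partial_x K_W(x,x_j)\big|_{x=x_i} = k'(x_i^\top M x_j)\,M x_j$ immediately gives $\fact = \sum_{i,j} k'(x_i^\top M x_j)\, M x_j\,\alpha_j^\top\alpha_i\, x_i^\top$; right-multiplying by $M^\top$ and relabeling $i\leftrightarrow j$ (valid since $M=M^\top$ and $k'(x_i^\top M x_j)$ is symmetric in $i,j$) yields the stated form. None of this is deep, but it is the substance of the proposition's $\fact$ half, so the proof is incomplete without it. Also note the ``symmetrization by $M^\top$'' that you worry about introduces no subtlety — the proposition states a formula for $\fact\cdot M^\top$, so you simply multiply on the right.
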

The proof is deferred to Appendix~\ref{app:fact-kernel-derivation}. The proposition reveals that the matrix $\fact \cdot M^{\top}$ is positive semi-definite when the function $k$ is non-increasing (a condition satisfied by common choices like $k(t) = \exp(t)$ or $k(t) = t^2$). This property allows for a simplification of \eqref{eq:rfm-update-wagop}, which can be rewritten as a geometric average between the current feature matrix and the $\fact$ term:
\begin{align}
    M_{t+1} &\gets (\fact_t M_t)^{1/2} \tag{FACT-RFM update' for inner-product kernels}\,.
\end{align}
This form should be compared with the NFA-RFM update, which also simplifies for inner-product kernels. We write the simplified form of the update below when the power $s$ is set to $1/2$:
\begin{align}
    M_t &\gets (\agop)^{1/2}\,. \tag{NFA-RFM update for inner-product kernels}
\end{align}
Notably, Proposition~\ref{prop:simplified-agop-lgop} also reveals that both updates share the same structural form. The difference lies in the specific factors involved: $\tau$ for the NFA update and $k'$ for the FACT update. Interestingly, both of these factors, $k'(x_i^{\top} M x_j)$ and $\tau(x_i,M,x_j)$, can be interpreted as measures of similarity between the data points $x_i$ and $x_j$. These measures increase when the transformed representations $Wx_i$ and $Wx_j$ are closer in the feature space, and decrease otherwise.

Consequently, if the similarity measures $\tau$ and $k'$ were approximately equal for most pairs of data points, this would explain the observed similarities in performance between the NFA-RFM and FACT-RFM methods, and account for their general agreement in tracking the feature learning process as it occurs in neural networks.

\paragraph{Empirical validation.} We empirically validate the above explanation, showing that indeed $\tau(x_i,x_j,M)$ is approximately proportional to $k'(x_i^{\top} M x_j)$ for FACT-RFM in the challenging setting of arithmetic modulo $p = 61$ (where as demonstrated in Section~\ref{sec:factrfm-reconstructs-behaviors} both algorithms converge to similar features). In Figure~\ref{fig:factnfasame}, we show that a best-fit line proportionally relating the two quantities achieves a good fit. 

\begin{figure}[h]
\centering
\includegraphics[width=0.5\linewidth]{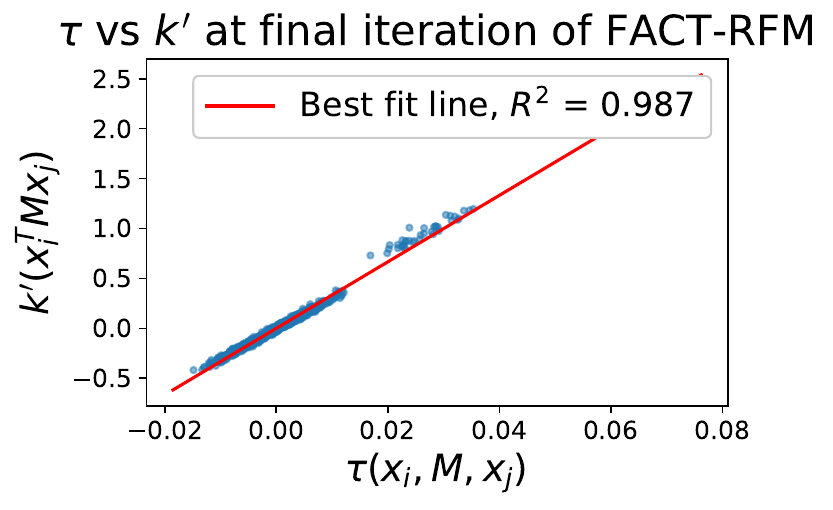}
\caption{Validation of explanation for why $\agop$ and $\fact$ are similar when FACT-RFM converges in the modular arithmetic task. Each point corresponds to a pair $(x_i,x_j)$ -- we subsample 1000 points for visualization purposes.}\label{fig:factnfasame}
\end{figure}

\vspace{-0.5em}

\section{NFA and FACT may be uncorrelated in worst-case settings}\label{sec:fact-separation}

Finally, as a counterpoint to the analysis in the previous section, we show that when the data distribution is chosen adversarially, NFA and FACT can differ drastically even for shallow, two-layer nonlinear networks. Thus, FACT is perhaps a preferable alternative to the NFA.

We craft a dataset to maximimize their disagreement on a trained two-layer architecture $f(x;a,W) = a^{\top} \sigma(Wx)$ with quadratic activation $\sigma(t) = t^2$ and parameters $a \in \R^m$, $W \in \R^{m \times d}$ and any large enough width $m \geq 7$.  For any $p \in (0,1)$ and $\tau \in (0,1)$, define the data distribution $\cD(p,\tau)$ over $(x,y)$ such that $x$ is drawn from a mixture of distributions: $x \sim \mathrm{Unif}[\{0,1,2\}^4]$ with probability $p$ and $x = (1,1,0,0)$ with probability $1-p$, and such that $y = f_*(x) = \tau x_1x_2 + x_3x_4 \in \R\,.$  For appropriate choices of the hyperparameters, we show that the NFA prediction can be nearly uncorrelated with weights that minimize the loss, while the FACT provably holds.

\begin{theorem}[Separation between NFA and FACT in two-layer networks]\label{thm:wagop-agop-uncorr-2-layer}
Fix any $s > 0$. For any $\epsilon \in (0,1]$, there are hyperparameters $p_{\epsilon}, \tau_{\epsilon}, \lambda_{\epsilon} \in (0,1)$ such that any parameters $\theta = (a,W)$ minimizing $\cL_{\lambda_{\epsilon}}(\theta)$ on data distribution $\cD(p_{\epsilon},\tau_{\epsilon})$ are nearly-uncorrelated with the \eqref{eq:nfa} prediction:
\begin{align*}
\mathrm{corr}((\agop)^s,W^{\top}W) < \epsilon\,,
\end{align*}
where the correlation $\mathrm{corr}$ is defined as $\mathrm{corr}(A,B) = \langle A,B\rangle / (\|A\|_F\|B\|_F)$. In contrast, \eqref{eq:fact} holds because the weights are at a stationary point.
\end{theorem}

\paragraph{Proof intuition.} At a loss minimizer, the neural network approximates the true function $f_*$ because part of the data distribution is drawn from the uniform distribution. Therefore, since the neural network computes a quadratic because it has quadratic activations, one can show $\agop \approx \E_{x \sim \cD(p_{\epsilon}, \tau_{\epsilon})}[(\nabla_x f_*) (\nabla_x f_*)^{\top}] \approx \tau_{\epsilon}^2 (e_1 + e_2)(e_1 + e_2)^{\top}+ O(p_{\epsilon})$,
For small $p_{\epsilon}$, this matrix has most of its mass in the first two rows and first two columns.

\begin{wrapfigure}{r}{0.3\textwidth}
\centering
\includegraphics[width=\linewidth]{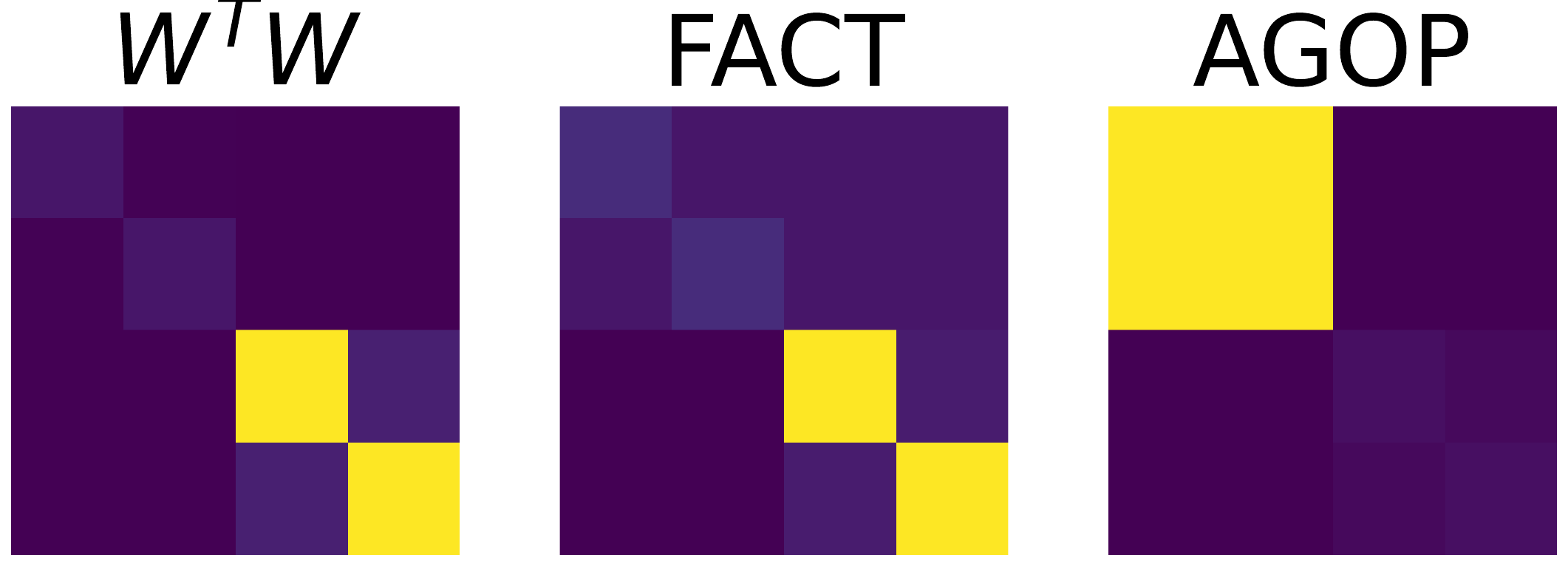}
\caption{The FACT and NFA are uncorrelated at convergence on the synthetic dataset.}\label{fig:nonlinear}
\end{wrapfigure}
    
On the other hand, the weight decay in training the neural network means that at convergence the norm of the network weights is minimized given the function it computes. Since the neural network approximates the true function $f_*$, in order to minimize the total norm of the weights, $W^{\top}W$ must have most of its mass on the last two rows and columns when $\tau_{\epsilon}$ is small. This is in contrast to $\agop$, since as we have argued that has most of its mass on the first two rows and columns. Thus, the NFA prediction is not met. On the other hand, the FACT prediction is provably met by Theorem~\ref{thm:fact}. The formal proof is in Appendix~\ref{app:FACT-nfa-separation-two-layer}.

The construction is empirically validated in Figure~\ref{fig:nonlinear}, which is the result of training a width-10 network for $10^6$ iterations of Adam with learning rate $0.01$ on the population loss with $\tau = 0.02$, $p = 10^{-5}$, $\lambda = 10^{-5}$. At convergence, $\fact$ achieves 0.994 cosine similarity with $W^{\top} W$, while $\agop$ achieves $< 0.068$ cosine similarity.

\section{Discussion}\label{sec:discussion}

This work pursues a first principles approach to understanding feature learning by deriving a condition that must hold in neural networks at critical points of the train loss. Perhaps the most striking aspect of our results is that FACT is based only on \textbf{local optimality} conditions of the loss. Nevertheless, in Section~\ref{sec:factrfm-reconstructs-behaviors} we show that when used to drive the RFM algorithm, FACT recovers 
interesting \textbf{global behaviors} of neural networks: including high-performing feature learning for tabular dataset tasks, and grokking and phase transition behaviors on arithmetic and sparse parities datasets.

The usefulness of FACT is especially surprising since there is no reason for $\fact$ to be correlated to neural feature matrices during most of training, prior to interpolating the train loss; and indeed $\fact$ does have low correlation for most epochs (although $\sqrt{\fact \cdot \fact^{\top}}$ has nontrivial correlation), before sharply increasing to near-perfect correlation; see Figure \ref{fig:fact_agop_vs_epochs}. This is a potential limitation to using $\fact$ to understand the evolution of features \textit{during training}, rather than in the terminal phase. Therefore, it is of interest to theoretically derive a quantity with more stable correlation over training.

\begin{figure}[h]
    \centering
    \includegraphics[width=0.8\linewidth]{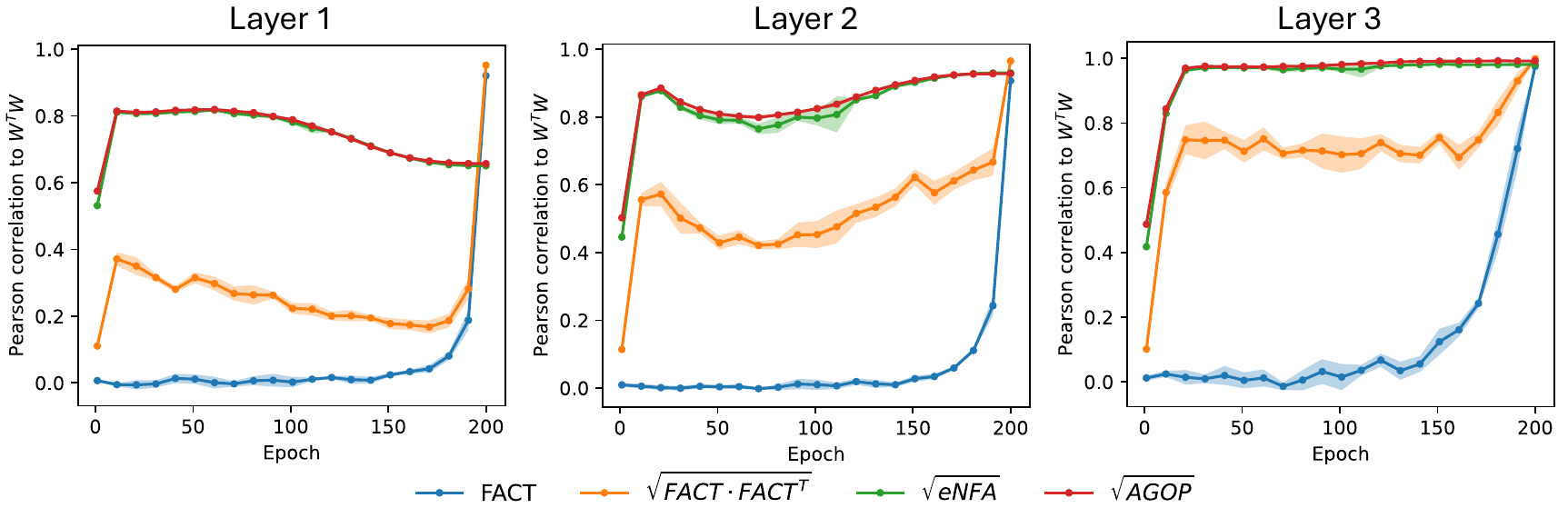}
    \caption{We train 5 layer ReLU MLPs to interpolation on CIFAR-10 and plot Pearson correlation vs. epochs comparing $\fact, \agop, \enfa$ to neural feature matrices for the first three layers of the model. Curves are averaged over five independent runs.}
    \label{fig:fact_agop_vs_epochs}
\end{figure}
\vspace{1em}

An additional limitation is that there are data distributions, such as sparse parity, where FACT-RFM becomes unstable if continued iterations are performed after convergence, so early stopping is necessary. Understanding this phenomenon may help  derive relations that improve over FACT.

Finally, our formulation of $\fact$ for neural networks requires non-zero weight decay. This is a reasonable assumption for real-world neural network training (LLMs are often pretrained with a reasonably large weight decay factor), but raises othe question of whether it is possible to compute $\fact$ in the zero weight-decay limit. In Section \ref{sec:nfa-explanation}, we formulate $\fact$ in a way that only relies on the kernel dual weights, the learned features, and the data.  Therefore it may be possible to compute $\fact$ in neural networks through the network's empirical neural tangent kernel \citep{jacot2020neuraltangentkernelconvergence,long2021properties}, which would allow using FACT without requiring weight decay.

\section*{Acknowledgements}
EB thanks the Simons Institute for the Theory of Computing at UC Berkeley, where part of this work was completed while he was a Research Fellow at the Special Year on Large Language Models and Transformers. He is also grateful for funding support from NSF grant CCF-2106377. NM gratefully acknowledges funding and support for this research from the Eric and Wendy Schmidt Center at The Broad Institute of MIT and Harvard.
JS thanks NM for the long ride to UCLA during which he worked out his contribution to the paper.

We acknowledge support from the National Science Foundation (NSF) and the Simons Foundation for the Collaboration on the Theoretical Foundations of Deep Learning through awards DMS-2031883 and \#814639 as well as the  TILOS institute (NSF CCF-2112665) and the Office of Naval Research (ONR N000142412631). 
This work used the programs (1) XSEDE (Extreme science and engineering discovery environment)  which is supported by NSF grant numbers ACI-1548562, and (2) ACCESS (Advanced cyberinfrastructure coordination ecosystem: services \& support) which is supported by NSF grants numbers \#2138259, \#2138286, \#2138307, \#2137603, and \#2138296. Specifically, we used the resources from SDSC Expanse GPU compute nodes, and NCSA Delta system, via allocations TG-CIS220009.

\appendix%

\clearpage

\section{Hyperparameters in empirical validation of FACT}\label{app:empirical-validation-hyperparameters}

In the empirical validation of FACT in Figure~\ref{fig:5layer_depth_corr}, we train the networks until convergence, which we operationalize as the point at which batch train loss $\leq 10^{-3}$ is achieved. The results are for training 5-layer fully-connected networks with Mean-Squared Error (MSE) loss for 200 epochs using SGD, momentum 0.9, initial learning rate $10^{-1}$, cosine decay learning rate schedule, weight decay $10^{-4}$, batch size $64$, depth $3$, and hidden width $1024$ for MNIST and $3072$ for CIFAR-10, and standard PyTorch initialization.

\section{Backward form of FACT}\label{app:bfact-complete}

We provide here an analogous ``backward'' form of the FACT condition, which applies to $WW^{\top}$ instead of $W$.

Recall from Section~\ref{sec:preliminaries} that the neural network depends on $W$ as
\begin{align*}
f(x) = g(Wh(x), x)\,.
\end{align*}
Out of convenience, we introduce notation to denote the gradient of the loss \textbf{with respect to the output of the layer} at data point $x_i$. We
write $$\nabla_{Wh} \ell_i := \pd{\ell(g(\tilde{h},x); y_i)}{\tilde{h}} \mid_{\tilde{h} = Wh(x_i)} \in \R^d.$$

With this notation in hand, the backward form of the FACT, which gives information about the left singular vectors, is:
\begin{theorem}
If the parameters of the network are at a differentiable, critical point of the loss with respect to $W$, then
\begin{align}WW^{\top} = \bfact := -\frac{1}{n\lambda} \sum_{i=1}^n (Wh(x_i))(\nabla_{Wh} \ell_i)^{\top} \,.\tag{bFACT} \label{eq:bfact}
\end{align}
\end{theorem}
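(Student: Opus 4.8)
The plan is to repeat verbatim the few-line argument behind the forward \eqref{eq:fact} in Theorem~\ref{thm:fact}, only multiplying the stationarity condition on the other side so as to produce $WW^\top$ in place of $W^\top W$. First I would note that a differentiable critical point of $\cL_\lambda$ with respect to $W$ gives $\nabla_W \cL_\lambda(\theta)=0$, and that the same chain-rule computation as in the proof of Theorem~\ref{thm:fact} --- differentiating $\ell(g(Wh(x_i),x_i);y_i)$ through its dependence on $Wh(x_i)$ --- writes this out as
\begin{align*}
0 = \nabla_W \cL_\lambda(\theta) = \lambda W + \frac{1}{n}\sum_{i=1}^n (\nabla_{Wh}\ell_i)\, h(x_i)^\top .
\end{align*}
This is exactly the identity that appears in the proof of Theorem~\ref{thm:fact} just before left-multiplication by $W^\top$; here $\nabla_{Wh}\ell_i$ is well-defined precisely because the critical point is assumed differentiable.

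Next I would transpose this equation to obtain $0 = \lambda W^\top + \frac{1}{n}\sum_i h(x_i)(\nabla_{Wh}\ell_i)^\top$ and then left-multiply by $W$: the regularization term becomes $\lambda WW^\top$ and the data term becomes $\frac{1}{n}\sum_i (Wh(x_i))(\nabla_{Wh}\ell_i)^\top$. Rearranging and dividing by $\lambda$ then gives
\begin{align*}
WW^\top = -\frac{1}{n\lambda}\sum_{i=1}^n (Wh(x_i))(\nabla_{Wh}\ell_i)^\top = \bfact,
\end{align*}
which is the claimed identity \eqref{eq:bfact}. Equivalently, one can right-multiply the original equation by $W^\top$ directly, obtaining $WW^\top = -\frac{1}{n\lambda}\sum_i (\nabla_{Wh}\ell_i)(Wh(x_i))^\top$, and then transpose this using that $WW^\top$ is symmetric.

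I do not anticipate any genuine obstacle here: the argument is the mirror image of Theorem~\ref{thm:fact}. The only point needing care is the bookkeeping of sidedness --- since $\nabla_W\cL(\theta)$ is naturally an outer product of output-side gradients with input activations, it is multiplication by $W$ (resp.\ $W^\top$) on the input-activation side that collapses $h(x_i)^\top W^\top$ into $(Wh(x_i))^\top$, after which a single transpose, justified by the symmetry of $WW^\top$, lands on the precise form stated in the theorem.
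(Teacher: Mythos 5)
Your proposal is correct and is essentially the same argument the paper gives: the paper likewise starts from $W(\nabla_W\cL_\lambda(\theta))^\top = 0$, which is exactly your ``transpose the stationarity equation, then left-multiply by $W$'' step, and then identifies $\nabla_W\cL(\theta) = \tfrac{1}{n}\sum_i(\nabla_{Wh}\ell_i)h(x_i)^\top$ via the same chain rule so that $W\,h(x_i)(\nabla_{Wh}\ell_i)^\top$ yields $(Wh(x_i))(\nabla_{Wh}\ell_i)^\top$. Your alternative route (right-multiply by $W^\top$, then transpose using symmetry of $WW^\top$) is an equivalent cosmetic variant, not a genuinely different argument.
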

\begin{proof}
Left-multiplying by $W$, we obtain
\begin{align*}
0 &= W(\nabla_W \cL_{\lambda}(\theta))^{\top} \\
&= W (\lambda W + \nabla_W \cL(\theta))^{\top} \\
&= W(\lambda W + \frac{1}{n} \sum_{i=1}^n (\pd{\ell(g(\tilde{h});y_i)}{\tilde{h}}\mid_{\tilde{h}=Wh(x_i)} h(x_i)^{\top})^{\top} \\
&= \lambda WW^{\top} + (Wh(x_i)) (\nabla_{Wh} \ell_i)^{\top}
\end{align*}
Rearranging and dividing by $\lambda$ proves \eqref{eq:bfact}.
\end{proof}

Again, we may symmetrize both sides of the equation and still get valid equations that hold at critical points of the loss: for instance we have
$WW^{\top} = \sqrt{\bfact \cdot \bfact^{\top}}$.

In the same way that we compute $\bfact$, we can compute an analogous ``backward" version of $\agop$ which is given by,
\begin{align*}
    \bagop &= \frac{1}{n}\sum_{i=1}^n (\nabla_{Wh}f_i)(\nabla_{Wh} f_i)^\top
\end{align*}
and consider whether this models the left singular vectors of layer weights as well.
The backward $\enfa$ is as computed in \citep{ziyin2025formationrepresentationsneuralnetworks}. Figures \ref{fig:5layer_depth_corr_backward} and \ref{fig:backward_over_epochs} show the complete set of comparisons for backward versions of $\fact, \agop, \enfa$ to their respective neural feature matrices compared across both depth and epochs. The hyperparameters and training setup are the same as that described in Appendix~\ref{app:empirical-validation-hyperparameters}.

\begin{figure}
  \centering
\includegraphics[width=0.8\linewidth]{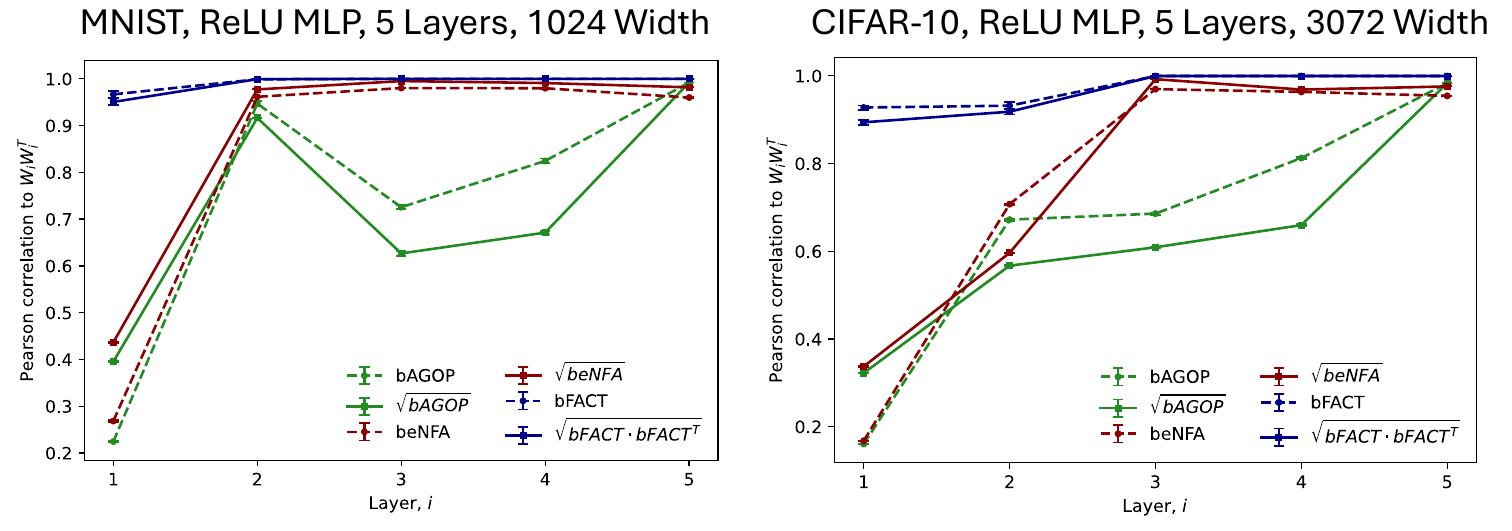}
  \caption{We train 5 hidden layer ReLU MLPs to interpolation on MNIST and CIFAR-10. We plot the Pearson correlation of the backward versions of $\fact, \agop, \enfa$ (with respect to pre-activation outputs of a layer) and compare to $WW^T$ for that layer. Curves are averaged over 5 independent runs.}
  \label{fig:5layer_depth_corr_backward}
\end{figure}

\begin{figure}[t]
  \centering
\includegraphics[width=0.9\linewidth]{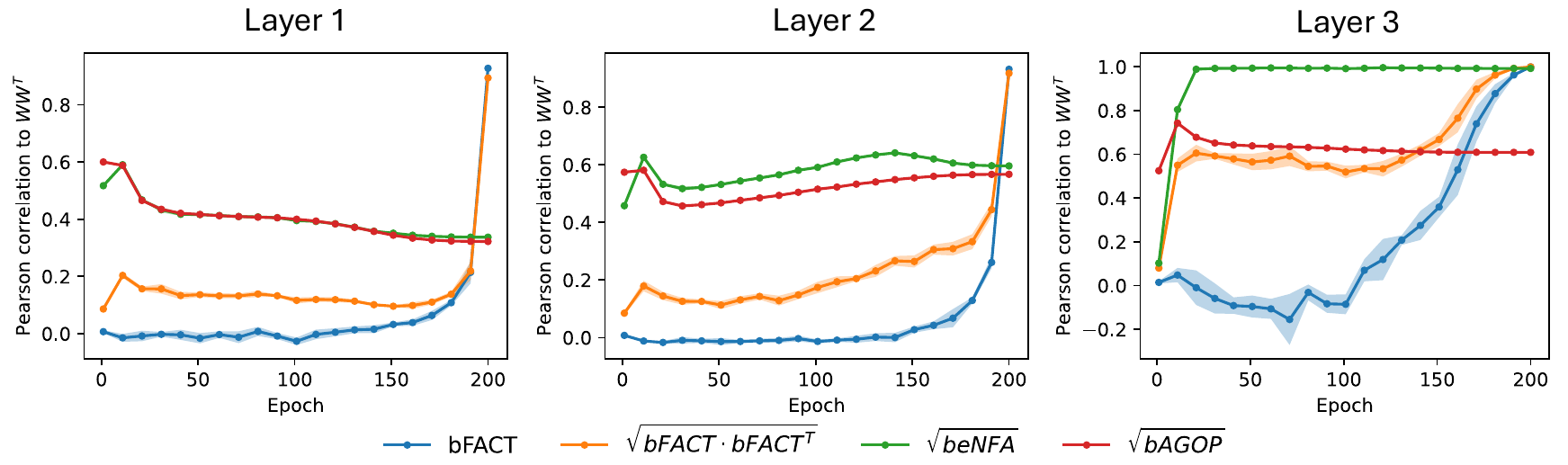}
  \caption{We train 5 hidden layer ReLU MLPs to interpolation on CIFAR-10. We plot the Pearson correlation of the backward versions of $\fact, \agop, \enfa$ (with respect to pre-activation outputs of a layer) vs. epochs. Curves are averaged over 5 independent runs.}
  \label{fig:backward_over_epochs}
\end{figure}

\section{Case study for deep linear networks
}\label{sec:approx-deep-linear}
In this appendix, we compare the predictions of FACT and NFA in the toy setting of deep linear networks, which have received significant attention in the theoretical literature as a simplified setting for studying training dynamics \cite{arora2019convergenceanalysisgradientdescent,ziyin_exact_deep_linear,marion2024deeplinearnetworksregression,saxe2014exactsolutionsnonlineardynamics,pmlr-v80-arora18a}. A deep linear network $f: \R^d \to \R^c$ is parameterized as
\begin{align*}
    f(x) = W_L\cdot W_{L-1} \cdots W_{1} x
\end{align*}
for $W_1 \in \R^{h \times d}, W_2,\ldots,W_{L-1} \in \R^{h \times h}, W_L \in \R^{c \times h}$. We fit the network on data points $x \sim \cN(0,I_d)$ and labels given by a ground truth linear transformation $f^*(x) = W^*x$ where $W^* \in \R^{c \times d}$. 

In this setting, \cite{radhakrishnan2025linear} show that the exponent $s$ in \eqref{eq:nfa} must scale as $1/L$ in order for the NFA prediction to be correct. Thus, unlike the FACT, the NFA has a tunable hyperparameter that must depend on the particular architecture involved. We rederive this dependence of the exponent on the architecture for completeness.

\paragraph{Informal derivation of NFA power dependence on depth} In this setting, the NFA prediction for the first layer can be computed as 
\begin{align*}
    \agop &= W_{1}^{\top} \cdots W_{L-1}^{\top} W_{L}^{\top} W_L\cdot W_{L-1} \cdots W_{1}\,,
\end{align*}
So, when training has converged and the network is close to fitting the ground truth $W^*$, we have
\begin{align*}
\agop \approx (W^*)^{\top} W^*\,.
\end{align*}
It is known that weight decay biases the solutions of deep linear networks to be ``balanced'' at convergence 
\cite{gunasekar2017implicit,arora2019implicit}, meaning that the singular values at each layer are equal. When the layers are balanced we should therefore heuristically expect that, after training, we have
\begin{align*}
W_1^{\top} W_1 \approx ((W^*)^{\top} W^*)^{1/L}\,,
\end{align*}
because singular values multiply across the $L$ layers. Putting the above equations together, at convergence we have
\begin{align*}
W_1^{\top} W_1 \approx (\agop)^{1/L}\,.
\end{align*}
For $L = 2$, we recover the prescription of using $\sqrt{\agop}$ suggested by \cite{radhakrishnan2024mechanism,beaglehole2023mechanism,mallinar2024emergence}.
However when $L \neq 2$, this is no longer the best power. Our analysis suggests that the $\agop$ power must be tuned with the depth of the network -- on the other hand, $\fact$ does not need this tunable parameter.

We empirically validate this in Figure \ref{fig:deep_linear_nfp}, with deep linear networks with $d = 10, c = 5, h = 512$ and varying the depth $L$, and sample $W^* \in \R^{c \times d}$ with independent standard Gaussian entries. The train dataset is of size $n = 5000$ where $x_i \sim \cN(0, I_d)$. 
We train to convergence using the Mean Squared Error (MSE) loss for $5000$ epochs with SGD, minibatch size 128, learning rate of $5 \times 10^{-3}$, weight decay of $10^{-2}$, and standard PyTorch initialization. After training, the singular values of all of the weight matrices are identical after training, indicating balancedness has been achieved.

\begin{figure}[h]
  \begin{center}
    \includegraphics[width=0.7\textwidth]{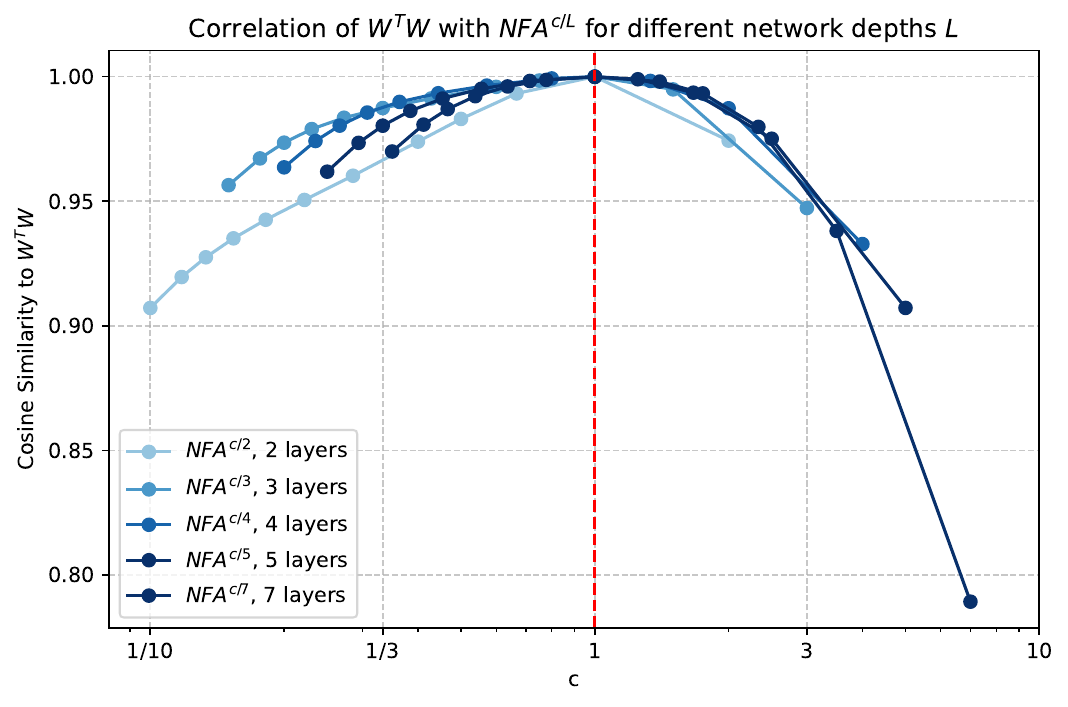}
  \end{center}
  \caption{Deep $L$-layer linear networks trained to convergence on synthetic data. $\agop^{1/L}$ has cosine similarity close to 1 to the NFM ($W_1^{\top} W_1$), which validates the derivation in Appendix~\ref{sec:approx-deep-linear}. For all of these network depths, $\fact$ has cosine similarity $\geq 0.999$, and there are no tunable hyperparameters that depend on depth.}
  \label{fig:deep_linear_nfp}
\end{figure}

\section{Proof of Theorem~\ref{thm:wagop-agop-uncorr-2-layer}, separating FACT and NFA for two-layer networks}\label{app:FACT-nfa-separation-two-layer}

We provide the proof of Theorem~\ref{thm:wagop-agop-uncorr-2-layer}, restating the theorem as Theorem~\ref{thm:wagop-agop-uncorr-2-layer-app} for convenience.

\paragraph{Setup} Consider a trained two-layer architecture $f(x;a,W) = a^{\top} \sigma(Wx)$ with quadratic activation $\sigma(t) = t^2$ and parameters $a \in \R^m$, $W \in \R^{m \times d}$. For any $p \in (0,1)$ and $\tau \in (0,1)$, define the data distribution $\cD(p,\tau)$ over $(x,y)$ such that $x$ is drawn from a mixture of distributions: $x \sim \mathrm{Unif}[\{0,1,2\}^4]$ with probability $p$ and $x = (1,1,0,0)$ with probability $1-p$, and such that $y = f_*(x) = \tau x_1x_2 + x_3x_4 \in \R\,.$

\begin{theorem}[Separation between NFA and FACT in two-layer networks; restated Theorem~\ref{thm:wagop-agop-uncorr-2-layer}]\label{thm:wagop-agop-uncorr-2-layer-app}
Fix $s > 0$ to be the NFA power. For any $\epsilon \in (0,1)$, there are hyperparameters $p_{\epsilon}, \tau_{\epsilon}, \lambda_{\epsilon} \in (0,1)$ such that any parameters $\theta = (a,W)$ minimizing $\cL_{\lambda_{\epsilon}}(\theta)$ on data distribution $\cD_{\epsilon} := \cD(p_{\epsilon},\tau_{\epsilon})$ are nearly-uncorrelated with the NFA prediction:
\begin{align*}
\mathrm{corr}((\agop)^s,W^{\top}W) < \epsilon\,,
\end{align*}
where $\mathrm{corr}(A,B) = \langle A,B\rangle / (\|A\|_F\|B\|_F)$ is the correlation. On the other hand, the FACT prediction holds.
\end{theorem}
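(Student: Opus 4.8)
The \eqref{eq:fact} half is immediate: a minimizer of $\cL_{\lambda_\epsilon}$ is a differentiable critical point of the loss with respect to $W$ (quadratic activations are smooth), so Theorem~\ref{thm:fact} applies verbatim. For the \eqref{eq:nfa} half, the plan is to show that $(\agop)^s$ and $W^{\top}W$ put almost all their Frobenius mass on the \emph{disjoint} coordinate blocks $\{1,2\}$ and $\{3,4\}$ respectively; since a symmetric matrix supported on rows/columns $\{1,2\}$ is Frobenius-orthogonal to one supported on $\{3,4\}$, the correlation is then forced to be small. Write $P_S$ for the coordinate projection onto $S\subseteq\{1,2,3,4\}$. \emph{Step 1 (reduce to the quadratic form).} Since $\sigma(t)=t^2$, the network computes $f(x;a,W)=x^{\top}Qx$ with $Q=W^{\top}\diag(a)W$ symmetric, and $f_*(x)=x^{\top}Q_*x$ with $Q_*$ having $(1,2),(2,1)$-entries $\tau/2$, $(3,4),(4,3)$-entries $1/2$, zeros else. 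I would build a reference $\theta_{\mathrm{ref}}$ that represents $f_*$ \emph{exactly}, using two rank-one neurons supported on $\{1,2\}$ (via $\tfrac\tau2(e_1e_2^{\top}+e_2e_1^{\top})=\tfrac\tau4(e_1+e_2)(e_1+e_2)^{\top}-\tfrac\tau4(e_1-e_2)(e_1-e_2)^{\top}$), each optimally rescaled so its cost $a_j^2+\|w_j\|_2^2$ is $\Theta(\tau^{2/3})$, plus a cost-optimal representation of the $\{3,4\}$ part, whose cost $C_2'$ is a positive constant, embedded with zero $\{1,2\}$-coordinates (any extra neurons among the $m\ge7$ set to zero); these are block-pure, produce no cross terms, and give $\|\theta_{\mathrm{ref}}\|_F^2=C_1\tau^{2/3}+C_2'$. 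Any minimizer $\theta^*$ then obeys $\cL(\theta^*)\le\cL_{\lambda_\epsilon}(\theta^*)\le\cL_{\lambda_\epsilon}(\theta_{\mathrm{ref}})=\tfrac{\lambda_\epsilon}2\|\theta_{\mathrm{ref}}\|_F^2=O(\lambda_\epsilon)$ and $\|\theta^*\|_F^2\le C_1\tau^{2/3}+C_2'$. As evaluations on $\{0,1,2\}^4$ (at $e_i$ and $e_i+e_j$) pin down a symmetric $4\times4$ matrix, the quadratic-form $L^2(\mathrm{Unif})$ norm is Frobenius-equivalent, and $\cL(\theta^*)\ge p_\epsilon\,\EE_{\mathrm{Unif}}[(x^{\top}(Q^*-Q_*)x)^2]\gtrsim p_\epsilon\|Q^*-Q_*\|_F^2$ gives $\|Q^*-Q_*\|_F\le\eta$ with $\eta^2=O(\lambda_\epsilon/p_\epsilon)$.

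\emph{Step 2 ($(\agop)^s$ concentrates on $\{1,2\}$).} The input to layer $W$ is $x$ itself and $\nabla_x f(x;\theta^*)=2Q^*x$, so $\agop=4Q^*\Sigma Q^*$ with $\Sigma=\EE_{\cD_\epsilon}[xx^{\top}]=(1-p_\epsilon)x_0x_0^{\top}+p_\epsilon\Sigma_u$, $x_0=(1,1,0,0)$. Since $\|Q^*-Q_*\|_F\le\eta$ and $\|\Sigma\|_{\mathrm{op}}=O(1)$, we have $\agop=4Q_*\Sigma Q_*+O(\eta)$, and using $Q_*x_0=\tfrac\tau2(e_1+e_2)$ the leading term $(1-p_\epsilon)\tau^2(e_1+e_2)(e_1+e_2)^{\top}$ is supported on $\{1,2\}$ with operator norm $\Theta(\tau^2)$, while the p.s.d.\ remainder $4p_\epsilon Q_*\Sigma_uQ_*$ has operator norm $O(p_\epsilon)$. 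Using $\|X^s-Y^s\|_{\mathrm{op}}\lesssim\|X-Y\|_{\mathrm{op}}^{\min(s,1)}$ times a power of $\max(\|X\|_{\mathrm{op}},\|Y\|_{\mathrm{op}})$ for p.s.d.\ $X,Y$, once $p_\epsilon$ and $\eta$ are small relative to $\tau$ one gets that $(\agop)^s$ is within $o(\tau^{2s})$ in Frobenius norm of a rank-one matrix supported on $\{1,2\}$ of norm $\Theta(\tau^{2s})$; hence $\|(\agop)^s-P_{12}(\agop)^sP_{12}\|_F\le\delta_1\|(\agop)^s\|_F$ with $\delta_1\to0$.

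\emph{Step 3 ($W^{\top}W$ concentrates on $\{3,4\}$ --- the hard part).} Put $m_{12}=\sum_j\|P_{12}w_j^*\|^2$ and $m_{34}=\sum_j\|P_{34}w_j^*\|^2$, so $\|W^*\|_F^2=m_{12}+m_{34}$. The difficulty is a two-sided bound on these, which is where the weight-decay-induced norm minimality must be cashed in. For $m_{34}=\Omega(1)$: the $\{3,4\}$ block of $Q^*$ lies within $\eta$ of the target, so has operator norm $\ge\tfrac12-\eta$; writing it as $\sum_j a_j^*(P_{34}w_j^*)(P_{34}w_j^*)^{\top}$, the triangle inequality plus a weighted AM--GM give $\tfrac12-\eta\le\sum_j|a_j^*|\,\|P_{34}w_j^*\|^2\le\|\theta^*\|_F^2\sqrt{m_{34}}$, so with $\|\theta^*\|_F^2=O(1)$ we obtain $m_{34}\ge c>0$. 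For $m_{12}=o(1)$: $\{(a_j^*,P_{34}w_j^*)\}_j$ represents the $\{3,4\}$ block of $Q^*$, which is within $\eta$ of the cost-$C_2'$ target, so by (Hölder-)continuity of the representational cost $\|a^*\|^2+m_{34}\ge C_2'-O(\eta^{2/3})$; subtracting from $\|\theta^*\|_F^2\le C_1\tau^{2/3}+C_2'$ gives $m_{12}=\|\theta^*\|_F^2-\|a^*\|^2-m_{34}\le C_1\tau^{2/3}+O(\eta^{2/3})$. Then the part of $W^{\top}W$ off the $\{3,4\}$ block (its $\{1,2\}$-block, of trace $m_{12}$, plus cross blocks of Frobenius norm $\lesssim\sqrt{m_{12}m_{34}}$) has Frobenius norm $O(\sqrt{m_{12}})$, while $\|W^{\top}W\|_F\ge m_{34}/\sqrt2=\Omega(1)$, so $\|W^{\top}W-P_{34}W^{\top}WP_{34}\|_F\le\delta_2\|W^{\top}W\|_F$ with $\delta_2=O(\sqrt{m_{12}})\to0$.

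\emph{Step 4 (conclude).} For symmetric $X$ that is $\delta_1$-concentrated on $\{1,2\}$ and $Y$ that is $\delta_2$-concentrated on $\{3,4\}$, the identity $\langle P_{12}XP_{12},P_{34}YP_{34}\rangle=0$ gives $|\langle X,Y\rangle|\le(\delta_1+\delta_2)\|X\|_F\|Y\|_F$, i.e.\ $\mathrm{corr}((\agop)^s,W^{\top}W)\le\delta_1+\delta_2$. It then suffices to pick $\tau_\epsilon$ small, then $p_\epsilon$ small relative to $\tau_\epsilon$, then $\lambda_\epsilon$ small enough that $\eta^2=O(\lambda_\epsilon/p_\epsilon)$ is small relative to $\tau_\epsilon$, making $\delta_1+\delta_2<\epsilon$. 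The main obstacle is Step 3: converting norm-minimality into the quantitative bounds on $m_{12},m_{34}$ while avoiding an explicit solution of the representational-cost optimization --- the route above needs only the existence of a block-pure near-optimal reference and continuity of the representational cost --- and handling the indefiniteness of $Q_*$ cleanly in the AM--GM step.
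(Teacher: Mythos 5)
Your proof is essentially the paper's proof with some steps re-packaged; the key ingredients are the same: an explicit minimum-norm reference that represents $f_*$, the bound $\cL_{\lambda_\epsilon}(\theta^*)\le\cL_{\lambda_\epsilon}(\theta_{\mathrm{ref}})$ yielding both $\|Q^*-Q_*\|_F\lesssim\sqrt{\lambda_\epsilon/p_\epsilon}$ and $\|\theta^*\|_F^2=O(1)$, concentration of $\agop$ on the $\{1,2\}$-block, and the min-cost formula $\min_{a,W}\{\|a\|^2+\|W\|_F^2 : f(\cdot;a,W)\equiv x^\top Qx\}=2\sum_i\sigma_i(Q)^{2/3}$ (Lemma~\ref{lem:min-norm-weights-quad} in the paper), which is exactly what your ``Hölder-continuity of the representational cost'' comes from. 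The one genuine difference is in how you cash in norm-minimality to bound $m_{12}$: the paper builds an explicit competitor for $(\hat a,\hat W)$ by zeroing columns $1,2$, taking a min-cost representer of the resulting network, and paying an extra $O((\tau_\epsilon+\delta_\epsilon)^{2/3})$ to add back $\hat f - f'$; you instead lower-bound the cost $\|a^*\|^2+m_{34}$ of the $\{3,4\}$-projected network by the min-cost of the $\{3,4\}$-block and subtract from the total budget $\|\theta^*\|_F^2\le C_1\tau^{2/3}+C_2'$. Your version is slightly slicker but does require the reference to be \emph{tight} in its $\{3,4\}$-cost (otherwise the subtraction loses a constant), while the paper's competitor argument does not --- worth noting if you flesh this out. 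Two minor remarks: (i) in your $m_{34}=\Omega(1)$ chain, the bound $\sum_j|a_j^*|\|P_{34}w_j^*\|^2\le\|\theta^*\|_F^2\sqrt{m_{34}}$ is not the cleanest; Cauchy--Schwarz plus $\sum_j\|P_{34}w_j^*\|^4\le m_{34}^2$ gives $\sum_j|a_j^*|\|P_{34}w_j^*\|^2\le\|a^*\|\,m_{34}$, which already yields $m_{34}\gtrsim1$; (ii) for the matrix power $(\agop)^s$ the paper works eigenvalue-by-eigenvalue via Weyl plus Davis--Kahan rather than via an operator-Hölder bound for $X\mapsto X^s$, but both routes work and yield $\mathrm{corr}\lesssim\epsilon^2+\epsilon^{2s}\to0$.
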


\begin{proof}
Set $\tau_{\epsilon} = \epsilon^3$, $p_{\epsilon} = \epsilon^8$, $\lambda_{\epsilon} = \epsilon^{32} p_{\epsilon}$. The outline of the proof that the loss-minimizing weights and $\agop$ are uncorrelated is to first show that there is a set of weights $\bar{a}, \bar{W}$ such that the loss $\cL_{\lambda_{\epsilon}}(\bar{a}, \bar{W})$ is small. This implies that at any minimizer $a^*,W^*$ we must also have that $\cL_{\lambda_{\epsilon}}(a^*,W^*)$ is small. In turn, this means that the estimated function $\hat{f}(\cdot) = f(\cdot; a^*, W^*)$ must be close to the true function $f_*(x) = \tau_{\epsilon} x_1x_2 + x_3x_4$. Finally, this will let us compare the $\agop$ to the loss-minimizing weights $\hat{a}, \hat{W}$.

\textul{1. Construct weights with low loss.} Construct $\bar{W} = \begin{bmatrix} \bar{w}_1,\ldots,\bar{w}_m\end{bmatrix}^{\top} \in \R^{m \times d}$ and $\bar{a} = \begin{bmatrix} \bar{a}_1,\ldots,\bar{a}_m \end{bmatrix}^{\top} \in \R^{m \times 1}$ by letting $\bar{w}_1 = e_1 + e_2$, $\bar{w}_2 = e_1$, $\bar{w}_3 = e_2$, $\bar{w}_4 = e_3 + e_4$, $\bar{w}_5 = e_3$, $\bar{w}_6 = e_4$, $\bar{a}_1 = \tau_{\epsilon} / 2$, $\bar{a}_4 = 1/2$, $\bar{a}_2 = \bar{a}_3 = -\tau_{\epsilon}$, $\bar{a}_5 = \bar{a}_6 = -1$, and $\bar{w}_j= 0$ and $\bar{a}_j = 0$ for all $j \geq 7$. One can check that $f(x;\bar{a}, \bar{W}) = f_*(x)$ for all $x$, and that $\|\bar{W}\|_F^2 + \|a\|^2 \leq 13$. Therefore
\begin{align}\cL_{\lambda_{\epsilon}}(\hat{a}, \hat{W}) \leq \cL_{\lambda_{\epsilon}}(\bar{a}, \bar{W}) \leq 169\lambda_{\epsilon}\,.\label{ineq:uncorr-loss-low}\end{align}

\textul{2. Conclude that $\hat{f} \approx f_*$.} Define $\hat{f}(\cdot) = f(\cdot; \hat{a}, \hat{W})$. Since $\hat{f}$ and $f_*$ are homogeneous quadratic functions, we may write them as
\begin{align*}
\hat{f}(x) = \sum_{1 \leq i \leq j \leq 4} \hat{c}_{ij} x_i x_j \quad \mbox{ and } \quad f_*(x) = \sum_{1 \leq i \leq j \leq 4} c_{ij} x_i x_j\,.
\end{align*}
Let us show that the coefficients $\{c_{ij}\}$ must be close to the estimated coefficients $\{\hat{c}_{ij}\}$ using a Fourier-analytic calculation. Define the distribution $\cU = \mathrm{Unif}[\{0,1,2\}^4]$ and the inner product between a pair of functions $\<g,h\>_{\cU} = \E_{x \sim \cU}[g(x)h(x)]$. Also define the functions
\begin{align*}\chi^{(0)}(t) = \begin{cases} 3, & t= 0 \\ 0, & t \in \{1,2\} \end{cases}, \quad \chi^{(1)}(t) = \begin{cases} -4.5, & t = 0 \\ 6, & t = 1 \\ -1.5 & t= 2\end{cases}, \quad \chi^{(2)}(t) = \begin{cases} 1.5, & t = 0 \\ -3, & t = 1 \\ 1.5, & t = 2 \end{cases}\,
\end{align*}
and for any vector of degrees $\alpha \in \{0,1,2\}^k$ define $\chi_{\alpha} : \{0,1,2\}^4 \to \R$ by $\chi_{\alpha}(x) = \prod_{i=1}^k \chi^{(\alpha_i)}(x_i)$. These functions have been picked so that for any $\alpha' \in \{0,1,2\}^k$ and monomial $h_{\alpha'}(x) = x_1^{\alpha'_1} x_2^{\alpha'_2} \dots x_k^{\alpha'_k}$, we have $\<h_{\alpha}, \chi_{\alpha'}\>_{\cU} = 1(\alpha = \alpha')\,.$
Therefore, for any $1 \leq i \leq j \leq 4$, there is $\alpha \in \{0,1,2\}$ such that
\begin{align*}
c_{ij} =\<f_*,\chi_{\alpha} \>_{\cU} \mbox{ and } \hat{c}_{ij} = \<\hat{f}, \chi_{\alpha}\>_{\cU}\,.
\end{align*}
Therefore, by Cauchy-Schwarz, for any $1 \leq i \leq j \leq 4$ and corresponding $\alpha$ we have
\begin{align*}
|c_{ij} - \hat{c}_{ij}| &= |\<f_* - \hat{f}, \chi_{\alpha}\>_{\cU}| \leq \|f_* - \hat{f}\|_{\cU} \|\chi_{\alpha}\|_{\cU} \leq 6^4 \|f_* - \hat{f}\|_{\cU}
\end{align*}
Now we can apply our previous bound in \eqref{ineq:uncorr-loss-low}, which implies that $\E_{(x,y) \sim \cD_{\epsilon}}[(\hat{f}(x)-f_*(x))^2] \leq \cL(\hat{a},\hat{W}) \leq 169\lambda_{\epsilon}$, and in turn means that
\begin{align*}
\|f-f_*\|_{\cU}^2 &= \<f-f^*,f-f^*\>_{\cU} = \E_{x \sim \mathrm{Unif}[\{0,1,2\}^4]}[(\hat{f}(x)-f_*(x))^2] \leq 169 \lambda_{\epsilon} / p_{\epsilon}.
\end{align*}
So the estimated coefficients $\{\hat{c}_{ij}\}$ are close to the true coefficients $\{c_{ij}\}_{ij}$, i.e., for any $1 \leq i \leq j \leq 4$,
\begin{align}
|c_{ij} - \hat{c}_{ij}| &\leq 17000 \sqrt{\lambda_{\epsilon} / p_{\epsilon}} := \delta_{\epsilon}\,.
\end{align}
Notice that $\delta_{\epsilon} \leq 17000\epsilon^{16} \leq 1/10$ for small enough $\epsilon$.

\textul{3a. Estimate the $\agop$ of $\hat{f}$.} Since we have shown $\hat{f} \approx f$, the $\agop$ of the estimated function can be well approximated as follows. 
\begin{align*}
\agop(\hat{f}, \cD_{\epsilon}) &=  \E_{(x,y) \sim \cD_{\epsilon}}[\pd{\hat{f}}{x} \pd{\hat{f}}{x}^{\top}] \\
&= (1-p_{\epsilon})\pd{\hat{f}}{x} \pd{\hat{f}}{x}^{\top} \mid_{x = (1,1,0,0)} + p_{\epsilon} \E_{(x,y) \sim \cU}[\pd{\hat{f}}{x} \pd{\hat{f}}{x}^{\top}] \\
\end{align*}
Since $|\hat{c}_{ij}| \leq |c_{ij}| + \delta_{\epsilon} \leq 1+\delta_{\epsilon} \leq 11/10$ for all $i,j$ it must hold that $\|\pd{\hat{f}}{x} \pd{\hat{f}}{x}^{\top}\|_F \leq 100$ for all $x \in \{0,1,2\}^4$, so
\begin{align}\label{ineq:uncorr-agop-1}
\|\agop(\hat{f}, \cD_{\epsilon}) - \pd{\hat{f}}{x} \pd{\hat{f}}{x}^{\top} \mid_{x = (1,1,0,0)}\|_F \leq 100p_{\epsilon}
\end{align}
Finally, $\pd{\hat{f}}{x} \mid_{x = (1,1,0,0)} = 2\hat{c}_{11} e_1 + 2\hat{c}_{22} e_2 + \hat{c}_{12} (e_1 + e_2) +  (\hat{c}_{13} + \hat{c}_{23})e_3 + (\hat{c}_{14} + \hat{c}_{24})e_4$, and $c_{11} = c_{22} = c_{13} = c_{14} = c_{23} = c_{24} = 0$ and $c_{12} = \tau_{\epsilon}$, which means that
\begin{align}\label{ineq:uncorr-agop-2}
\|\pd{\hat{f}}{x} \pd{\hat{f}}{x}^{\top} \mid_{x = (1,1,0,0)} - \tau_{\epsilon}^2 (e_1 + e_2)(e_1+e_2)^{\top}\|_F &\leq 20 \|\pd{\hat{f}}{x} \mid_{x = (1,1,0,0)} - \tau_{\epsilon} (e_1 + e_2)\|_F \leq 1000 \sqrt{\delta_{\epsilon}}\,.
\end{align}
Putting the \eqref{ineq:uncorr-agop-1} and \eqref{ineq:uncorr-agop-2} together with the triangle inequality, we conclude our estimate of the $\agop$
\begin{align}\label{ineq:uncorr-agop-3}
\|\agop(\hat{f}, \cD_{\epsilon}) - \tau_{\epsilon}^2 (e_1 + e_2)(e_1+e_2)^{\top}\|_F \leq 100p_{\epsilon} + 1000\sqrt{\delta_{\epsilon}}\,.
\end{align}

\textul{3a. Estimate powers of the $\agop$ of $\hat{f}$.} 
Next, let $s > 0$ be the power of the $\agop$ that we will take.

Let $\lambda_1 \geq \dots \geq \lambda_4 \geq 0$ be the eigenvalues of $\agop$, with a 
corresponding set of orthonormal eigenvectors $v_1,\ldots,v_4 \in \R^4$. By Weyl's inequality, since $\tau_{\epsilon}^2 \geq 100p_{\epsilon} + 1000\sqrt{\delta_{\epsilon}}$ for small enough $\epsilon$, 
\begin{align*}
\lambda_1 \geq 2\tau_{\epsilon}^2 - (100p_{\epsilon} + 1000\sqrt{\delta_{\epsilon}}) \gtrsim \epsilon^6
\end{align*}
and
\begin{align*}
\lambda_1 \lesssim \epsilon^6
\end{align*}
and
\begin{align*}
0 \leq \lambda_4 \leq \lambda_3 \leq \lambda_2 \leq 100p_{\epsilon} + 1000\sqrt{\delta_{\epsilon}} \lesssim \epsilon^8\,.
\end{align*}
Additionally, let $P_{\perp}$ be the projection to the orthogonal subspace spanned by $\{(e_1 + e_2)\}$. By the Davis-Kahan $\sin(\Theta)$ theorem \cite{davis1970rotation},
\begin{align*}\|P_{\perp} v_1\| \leq \frac{100p_{\epsilon} + 1000\sqrt{\delta_{\epsilon}}}{\tau_{\epsilon}^2} \lesssim \epsilon^2.
\end{align*}

Notice that $\agop = \sum_{i=1}^4 \lambda_i^s v_iv_i^{\top}$, which we will use later.

\textul{4. Estimate the loss-minimizing weights.} Now let us estimate the loss-minimizing weights, $\hat{a}, \hat{W}$. The argument here is split into two parts: we want to (a) show that $ \hat{W}$ is small in the first and second columns, and (b) show that $\hat{W}$ is large in the third or fourth column. These two facts combined will be enough show that $\hat{W}^{\top} \hat{W}$ is close to uncorrelated to the $\agop$.

\textul{4a. Show that $\hat{W}_{1:m,1}$ and $\hat{W}_{1:m,2}$ are small.}
Define weights $a', W'$ by letting $a' = \hat{a}$ and $W' = \begin{bmatrix} 0 & 0 & \hat{W}_{1:m,3} &  \hat{W}_{1:m,4} \end{bmatrix}$. In other words, we have zeroed out the coefficients of the variables $x_1$ and $x_2$ in the first layer. Then define
\begin{align*}
f'(x) &= (a')^{\top} \sigma((W')x)\,.
\end{align*}
If we write $f'(x) = \sum_{1 \leq i \leq j \leq 4} = c'_{ij} x_i x_j$, notice that $c'_{11} = c'_{12} = c'_{13} = c'_{14} = c'_{23} = c'_{24} = 0$ and that $c'_{34} = \hat{c}_{34}$, $c'_{33} = \hat{c}_{33}$, and $c'_{44} = \hat{c}_{44}$. Now, let $a'', W''$ be weights minimizing $\|a''\|^2 + \|W''\|_F^2$ such that 
\begin{align*}
f'(\cdot) \equiv f(\cdot; a'', W'')\,.
\end{align*}
By the construction in Lemma~\ref{lem:min-norm-weights-quad}, we may assume without loss of generality that all but 4 neurons are nonzero: i.e., that $a''_{5} = \dots = a''_{m} = 0$ and $W''_{5,1:4} = \dots W''_{m,1:4} = 0$. Now the difference between the network after the zeroing out and the current network is $\hat{f}(x)-f'(x) = \sum_{1 \leq i \leq j \leq 4} \tilde{c}_{ij} x_i x_j$ where
\begin{align*}
|\tilde{c}_{ij}| = |\hat{c}_{ij} - c'_{ij}| \leq \tau_{\epsilon} + \delta_{\epsilon}\,.
\end{align*}
So by Lemma~\ref{lem:min-norm-weights-quad}, this difference can be represented on four neurons with a cost of at most $100 (\tau_{\epsilon} + \delta_{\epsilon})^{2/3}$. Therefore, by editing the weights $a'',W''$ we can construct weights $a''',W'''$ such that $\hat{f}(\cdot) \equiv f(a''',W''')$ and
\begin{align*}
\|\hat{a}\|^2 + \|\hat{W}\|_F^2 &\leq \|a'''\|^2 + \|W'''\|_F^2 \\
&\leq \|a''\|^2 + \|W''\|_F^2 + 100(\tau_{\epsilon} + \delta_{\epsilon})^{2/3} \\
&= \|\hat{a}\|^2 + \|\hat{W}\|_F^2 - \|\hat{W}_{1:m,1}\|^2 - \|\hat{W}_{1:m,2}\|^2 + 100(\tau_{\epsilon} + \delta_{\epsilon})^{2/3}\,.
\end{align*}
So we can conclude that the norm of the weights in the first and second column is small
\begin{align}\label{ineq:uncorr-min-loss-1}
\|\hat{W}_{1:m,1}\|^2 + \|\hat{W}_{1:m,2}\|^2 \leq 100(\tau_{\epsilon} + \delta_{\epsilon})^{2/3} \lesssim \epsilon^2\,.
\end{align}

\textul{4b. Show that at least one of $\hat{W}_{1:m,3}$ or $\hat{W}_{1:m,4}$ is large.} Finally, let us show that either the third or fourth column of the weights is large.
\begin{align*}0.9 &\leq c_{34} - \delta_{\epsilon} \leq \hat{c}_{34} \leq \sum_{i=1}^m \hat{a}_i \hat{W}_{i,3} \hat{W}_{i,4} \\
&\leq \|\hat{a}\| \sqrt{\sum_{i=1}^m (\hat{W}_{i,3} \hat{W}_{i,4})^2} \\
&\leq \|\hat{a}\| \sqrt{\sum_{i=1}^m (\hat{W}_{i,3})^2} \sqrt{\sum_{i=1}^m \hat{W}_{i,4})^2} \\
&= \|\hat{a}\| \|\hat{W}_{1:m,3}\|\|\hat{W}_{1:m,4}\|\,.
\end{align*}
From the construction of the weights $\bar{a},\bar{W}$ in the first step of this proof, we know that $\|\hat{a}\|^2 \leq \|\bar{a}\|^2 + \|\bar{W}\|^2 \leq 13$. So $\|\hat{a}\| \leq 4$. We conclude that 
\begin{align}\label{ineq:uncorr-min-loss-2}\max(\|\hat{W}_{1:m,3}\|, \|\hat{W}_{1:m,4}\|) \geq 1/3.
\end{align}

\textul{5. Compare $\agop$ to loss-minimizing weights.} Finally, let us compare the NFA approximation \eqref{ineq:uncorr-agop-3} to the facts proved in \eqref{ineq:uncorr-min-loss-1} and \eqref{ineq:uncorr-min-loss-2} about the loss-minimizing weights. 
From \eqref{ineq:uncorr-agop-3} and \eqref{ineq:uncorr-min-loss-1} and $\|\hat{W}\|_F^2 \leq 13$ and the calculations in step 3b, we conclude that
\begin{align*}
\<(\agop(\hat{f}, \cD_{\epsilon}))^s, \hat{W}^{\top} \hat{W}\> &= \sum_{i=1}^4 \lambda_i^s \<v_iv_i^{\top}, \hat{W}^{\top} \hat{W}\> \\
&\lesssim (\|\hat{W}_{1:m,1}\|^2 + \|\hat{W}_{1:m,2}\|^2)(\lambda_1^s) + \lambda_1^s \|P_{\perp} v_1\|^2 
+ \lambda_2^s + \lambda_3^s + \lambda_4^s \\
&\lesssim \epsilon^2 \epsilon^{6s} + \epsilon^{8s}\,.
\end{align*}

From \eqref{ineq:uncorr-agop-3} and step 3b we conclude that
\begin{align*}
\|(\agop(\hat{f}, \cD_{\epsilon}))^s\|_F \gtrsim (2\tau_{\epsilon}^2 - 100p_{\epsilon} - 1000 \sqrt{\delta_{\epsilon}})^s \geq \tau_{\epsilon}^{2s} \gtrsim \epsilon^{6s}\,.
\end{align*}
From \eqref{ineq:uncorr-min-loss-2}, we conclude that
\begin{align*}
\|\hat{W}^{\top} \hat{W}\| \geq 1/9 \gtrsim 1\,.
\end{align*}

which implies that
\begin{align*}
\mathrm{corr}(\agop(\hat{f}, \cD_{\epsilon}), \hat{W}^{\top} \hat{W}) \lesssim (\epsilon^2 \epsilon^{6s} + \epsilon^{8s}) / \epsilon^{6s} \lesssim \epsilon^{2s} + \epsilon^2,
\end{align*}
which can be taken arbitrarily small by sending $\epsilon$ to 0.
\end{proof}

The Lemma that we used in the proof of this theorem is below.

\begin{lemma}[The minimum-norm weight solution for a network with quadratic activation]\label{lem:min-norm-weights-quad}
Let $f(x;a,W) = a^{\top} \sigma(Wx)$ be a neural network with quadratic activation function $\sigma(t) = t^2$ and weights $W \in \R^{m \times d}, a \in \R^m$ for $m \geq d$. Then, for any homogeneous quadratic function $f(x) = x^{\top} Q x$ , where $Q = Q^{\top}$, the minimum-norm neural network that represents $f$ has cost:
\begin{align*}
2\sum_{i=1}^d \sigma_i(Q)^{2/3} = \min_{a,W} \{\|a\|^2 + \|W\|_F^2 : f(\cdot;a,W) \equiv f(\cdot)\}\,,
\end{align*}
and this can be achieved with a network that has at most $d$ nonzero neurons.
\end{lemma}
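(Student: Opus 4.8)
The plan is to translate the lemma into a rank-one decomposition problem for the symmetric matrix $Q$ and then prove matching upper and lower bounds on the total weight norm. \textbf{Reduction.} Writing $w_i^{\top}$ for the $i$-th row of $W$, the network computes $f(x;a,W) = \sum_{i=1}^m a_i (w_i^{\top} x)^2 = x^{\top}\big(\sum_{i=1}^m a_i w_i w_i^{\top}\big) x$, and since a symmetric matrix is determined by the quadratic form it induces, $f(\cdot;a,W)\equiv x^{\top}Qx$ is equivalent to $\sum_{i=1}^m a_i w_i w_i^{\top} = Q$. So the quantity to compute is $\min\{\sum_i (a_i^2 + \|w_i\|^2) : \sum_i a_i w_i w_i^{\top} = Q\}$. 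After deleting neurons with $a_i=0$ or $w_i=0$ (which cost nothing to remove and contribute nothing to $Q$), each surviving neuron contributes a rank-one term $c_i\,\hat w_i \hat w_i^{\top}$ with $c_i := a_i\|w_i\|^2 \in \R$ and $\hat w_i := w_i/\|w_i\|$ a unit vector, at cost $a_i^2 + \|w_i\|^2$.

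\textbf{Upper bound (construction).} Diagonalize $Q = \sum_{j=1}^d \mu_j v_j v_j^{\top}$ with $\{v_j\}$ orthonormal and $|\mu_j| = \sigma_j(Q)$. Use one neuron per nonzero $\mu_j$, with $w_j$ proportional to $v_j$, and choose the two scales (of $w_j$ and of $a_j$) so as to realize $a_j w_j w_j^{\top} = \mu_j v_j v_j^{\top}$ at minimal cost; this is the one-variable optimization $\min_{r>0}(\mu_j^2/r^4 + r^2)$, whose optimum is a fixed constant times $|\mu_j|^{2/3} = \sigma_j(Q)^{2/3}$. Leaving all remaining neurons at zero produces a representation attaining the claimed cost using at most $d$ nonzero neurons, which also yields the ``achievability'' clause.

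\textbf{Lower bound.} For an arbitrary decomposition $Q = \sum_i c_i\hat w_i\hat w_i^{\top}$, combine two ingredients. First, per neuron, weighted AM--GM in the free scalars $|a_i|$ and $\|w_i\|$ (subject to $a_i\|w_i\|^2 = c_i$) gives $a_i^2 + \|w_i\|^2 \ge \kappa\,|c_i|^{2/3}$ with the \emph{same} sharp constant $\kappa$ attained by the construction. Second, the Schatten-$2/3$ quasinorm is $p$-subadditive for $p=2/3$ (a standard consequence of Weyl-type singular value inequalities together with concavity of $t\mapsto t^{2/3}$), and since a scaled rank-one projector has $\|c_i\hat w_i\hat w_i^{\top}\|_{S_{2/3}}^{2/3} = |c_i|^{2/3}$,
\begin{align*}
\sum_{j=1}^{d}\sigma_j(Q)^{2/3} \;=\; \Big\|\sum_i c_i\hat w_i\hat w_i^{\top}\Big\|_{S_{2/3}}^{2/3} \;\le\; \sum_i |c_i|^{2/3}.
\end{align*}
Chaining these, $\sum_i (a_i^2 + \|w_i\|^2) \ge \kappa\sum_i |c_i|^{2/3} \ge \kappa\sum_{j}\sigma_j(Q)^{2/3}$, which matches the upper bound.

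\textbf{Main obstacle.} The framework is routine; the delicate part is the bookkeeping of the constant, namely verifying that the per-neuron AM--GM lower bound and the per-neuron construction meet at exactly the same value of $\kappa$, so that one obtains the claimed \emph{equality} rather than merely a two-sided estimate, together with correctly invoking (or reproving) the $p$-subadditivity of the Schatten-$2/3$ quasinorm — this is precisely where the exponent $2/3$, rather than a naive exponent, is forced. A minor additional care is the treatment of neurons with $c_i=0$ in the lower bound, which must be excluded from the quasinorm inequality while still contributing a nonnegative cost.
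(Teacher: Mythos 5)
Your reduction, construction, and lower-bound structure follow the paper's proof exactly: reduce to minimizing $\sum_i(a_i^2+\|w_i\|^2)$ over rank-one decompositions $Q=\sum_i a_i w_i w_i^{\top}$, bound each neuron's cost below by a per-neuron scalar optimization, and apply the Rotfel'd--Thompson inequality (equivalently, $p$-subadditivity of the Schatten-$2/3$ quasinorm, the reference you give is correct) to relate $\sum_i|c_i|^{2/3}$ to $\sum_j\sigma_j(Q)^{2/3}$. Your handling of zero-cost neurons is also fine.

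The issue is precisely the ``delicate part'' you flag and defer: if you actually carry out the one-variable optimization, the optimal per-neuron constant is not $2$. Minimizing $\alpha^2+\beta^2$ subject to $\alpha\beta^2=c$ (with $\alpha=|a_i|$, $\beta=\|w_i\|$) gives $\beta^2=2\alpha^2$ and optimal value $3\cdot 2^{-2/3}c^{2/3}$, and the same constant $\kappa=3\cdot 2^{-2/3}\approx 1.89$ is achieved by the one-neuron-per-eigenvector construction when the scale $r$ is chosen optimally. So your upper and lower bounds do meet, but at $\kappa=3\cdot 2^{-2/3}$, not $2$. In other words, the equality in Lemma~\ref{lem:min-norm-weights-quad} as stated is incorrect: the true minimum is $3\cdot 2^{-2/3}\sum_{i}\sigma_i(Q)^{2/3}$. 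The paper's own argument has a corresponding error: the claimed per-neuron inequality $a_i^2+\|w_i\|^2\geq 2\,\sigma_1(a_i w_i w_i^{\top})^{2/3}=2(|a_i|\,\|w_i\|^2)^{2/3}$ is false (take $|a_i|=1$, $\|w_i\|=\sqrt{2}$: the left side is $3$ while the right side is $2^{5/3}\approx 3.17$), and the construction with $|a_i|=\|w_i\|=|\lambda_i|^{1/3}$ achieves cost $2|\lambda_i|^{2/3}$, which is a valid upper bound but not the minimum. This does not affect the downstream use of the lemma in the proof of Theorem~\ref{thm:wagop-agop-uncorr-2-layer}, which only invokes the ``at most $d$ nonzero neurons'' claim and a constant-times-$\sum\sigma_i(Q)^{2/3}$ upper bound where the absolute constant is irrelevant. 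But if you state the lemma as an exact equality, the constant should be $3\cdot 2^{-2/3}$.
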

\begin{proof}
We can expand the definition of the quadratic network
\begin{align*}
f(x;a,W) &= a^{\top} \sigma(Wx) = \sum_{i=1}^m x^{\top} a_i w_i w_i^{\top} x\,.
\end{align*}
For any $a,W$ such that $f(\cdot;a,W) \equiv f(\cdot)$, we must have $Q = \sum_{i=1}^m a_i w_i w_i^{\top}$. By (a) inequality (2.1) in \cite{thompson1976convex} on concave functions of the singular values of sums of matrices (originally proved in \cite{rotfel1969singular}), we must have
\begin{align*}
\|a\|^2 + \|W\|_F^2 &= \sum_{i=1}^m a_i^2 + \|w_i\|^2 \\
&\geq 2\sum_{i=1}^m \sigma_1(a_i  w_i w_i^{\top})^{2/3} \\
&= 2\sum_{i=1}^m \sum_{j=1}^m \sigma_j(a_i w_i w_i^{\top})^{2/3} \\
&\stackrel{(a)}{\geq} 2\sum_{j=1}^d \sigma_j(Q)^{2/3}\,.
\end{align*}
And notice that given an eigendecomposition $(\lambda_1,v_1),\ldots,(\lambda_d,v_d)$ of $Q$, this can be achieved by letting $a_i = \sgn(\lambda_i)|\lambda_i|^{1/3}$, and $w_i = |\lambda_i|^{1/3} v_i$ for all $1 \leq i \leq d$ and $a_i = 0$ and $w_i = 0$ for all $d+1 \leq i \leq m$.

\end{proof}

\section{Derivation and justification of FACT-RFM update}\label{app:fact-rfm-update}

The simplest fixed-point iteration scheme would be to apply
\begin{align}
W_{t+1}  \gets \sqrt{\fact_t}\,, \label{eq:malformed-rfm-update}
\end{align}
aiming for the fixed point
\begin{align*}
W_{t+1}^{\top} W^{\top}_{t+1} = \fact_t\,.
\end{align*}
However, this scheme cannot be directly implemented because \eqref{eq:malformed-rfm-update} is not necessarily well-defined. In particular, $\fact$ is not necessary p.s.d. when the network is not at a critical point of the loss, so the square root of $\fact$ in \eqref{eq:malformed-rfm-update} is not well defined.

In order to fix it, the most natural solution is to symmetrize $\fact$ and instead run the scheme
\begin{align*}
W_{t+1} \gets (\fact_t \fact_t^{\top})^{1/4}\,,
\end{align*}
since indeed when $\fact_t = W_{t}^{\top} W_t$ we are at a fixed point with this update.

We experimented with this update, and found good performance with tabular data (this is ``no geometric averaging'' method reported in Table~\ref{tab:uci_tabular_benchmark}) and parity data, but for the modular arithmetic problem FACT-RFM with this update was unstable and the method often did not converge -- especially in data regimes with low signal.

In order to obtain a more stable update, we chose to geometrically average with the previous iterate, as follows:
\begin{align*}
W_{t+1} \gets (\fact_t (W_t^{\top} W_t) (W_t^{\top} W_t) (\fact_t)^{\top})^{1/8}\,,
\end{align*}
which again has a fixed point when $\fact_t = W_{t}^{\top} W_t$. This yielded improved performance with modular arithmetic while retaining performance with tabular data and parities. Additionally, as we discuss in Section~\ref{sec:nfa-explanation}, we then discovered that this update has an interpretation as being a close relative of the NFA-RFM update when applied to inner product kernel machines.

\section{Proofs for Section~\ref{sec:nfa-explanation}}\label{app:fact-kernel-derivation}

We first observe that the updates in FACT-RFM can be written in a convenient form in terms of the dual solution $\alpha$ and the derivatives of the estimator. This lemma does not depend on the kernel being an inner-product kernel.
\begin{lemma}[Simplified form of FACT for kernel machines]\label{lem:lgop-rewriting}
Let $(X,y)$ be training data fit by a kernel machine with the MSE loss, and let $\alpha$ be first-order optimal coefficients for kernel regression with $\lambda$-ridge regularization. Then the $\fact$ can be equivalently computed as
\begin{align*}
\fact = \sum_{i,j=1}^n (\pd{}{x} K_W(x,x_j)\mid_{x=x_i}) \alpha_j^{\top} \alpha_i x_i^{\top}\,.
\end{align*}
\end{lemma}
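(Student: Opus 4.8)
The plan is to specialize the defining identity \eqref{eq:fact} for $\fact$ to the kernel machine used in RFM and then simplify. Here the relevant weight matrix is the learned transformation $W$, which enters the kernel as $K_W(x,x')=K(Wx,Wx')$; consequently the input to the ``$W$-layer'' is just $h(x)=x$, and we regard the current predictor $\hat f(x)=\sum_{j=1}^n K_W(x,x_j)\alpha_j$ as a function of the query point through $Wh=Wx$, with the centers $\{Wx_j\}$ and the dual coefficients $\{\alpha_j\}$ treated as fixed parameters (this is the same convention under which the $\agop$ update in Algorithm~\ref{alg:rfm} is defined). Under this interpretation $\fact=-\frac{1}{n\lambda}\sum_{i=1}^n(\nabla_h\ell_i)\,x_i^\top$, so the remaining work is to compute $\nabla_h\ell_i$ and to simplify it using first-order optimality of $\alpha$.

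First I would compute $\nabla_h\ell_i$, the gradient of the $i$-th MSE term $\ell(\hat f(x),y_i)=\tfrac12\|\hat f(x)-y_i\|^2$ with respect to the query's layer input, evaluated at $x_i$. Differentiating through the query argument of $\hat f$ only and applying the chain rule gives
\begin{align*}
\nabla_h\ell_i \;=\; \sum_{j=1}^n \Big(\tfrac{\partial}{\partial x}K_W(x,x_j)\big|_{x=x_i}\Big)\,\alpha_j^\top\big(\hat f(x_i)-y_i\big),
\end{align*}
where the contraction $\alpha_j^\top(\hat f(x_i)-y_i)$ is the scalar produced by the $\R^c$-valued output.

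Next I would invoke first-order optimality of the ridge solution: since $\alpha=(K_W(X,X)+n\lambda I)^{-1}y$, the residual vector satisfies $\hat f(X)-y=K_W(X,X)\alpha-y=-n\lambda\alpha$, hence $\hat f(x_i)-y_i=-n\lambda\,\alpha_i$ for every $i$. Substituting this yields $\nabla_h\ell_i=-n\lambda\sum_{j}\big(\tfrac{\partial}{\partial x}K_W(x,x_j)|_{x=x_i}\big)\,\alpha_j^\top\alpha_i$, and plugging back into $\fact=-\frac{1}{n\lambda}\sum_i(\nabla_h\ell_i)x_i^\top$ the prefactors $-\tfrac{1}{n\lambda}$ and $-n\lambda$ cancel, leaving exactly $\fact=\sum_{i,j}\big(\tfrac{\partial}{\partial x}K_W(x,x_j)|_{x=x_i}\big)\,\alpha_j^\top\alpha_i\,x_i^\top$.

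The only real subtlety is the bookkeeping, not any hard estimate: one must make sure $\nabla_h\ell_i$ differentiates only the query occurrence of $x_i$ — not its occurrence as a center, nor the implicit dependence of $\alpha$ on $W$ — which is the convention built into FACT-RFM and is what keeps the resulting matrix $d\times d$ like $W^\top W$; and one must track the $\R^c$-valued dual coefficients so that the correct scalar $\alpha_j^\top\alpha_i$ appears. A minor convention point is the overall loss normalization, which I have fixed above by taking $\ell(z,y)=\tfrac12\|z-y\|^2$ so that $\partial_z\ell=z-y$; a different normalization would only rescale the identity by a constant irrelevant to the RFM update. Everything else is a one-line chain rule plus the elementary rearrangement of the ridge normal equations.
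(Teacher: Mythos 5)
Your proof is correct and follows essentially the same route as the paper's: specialize the FACT definition with $h(x)=x$, apply the chain rule to $\nabla_h\ell_i$ treating only the query occurrence of $x$ as the differentiation variable, and then invoke the ridge first-order optimality condition $\hat f(X)-y=-n\lambda\alpha$ (the paper isolates this last step into a separate helper lemma, but it is the identical observation) to cancel the $-\tfrac{1}{n\lambda}$ prefactor.
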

The proof is by using known first-order optimality conditions for $\alpha$.

Let us prove the convenient expression in Lemma~\ref{lem:lgop-rewriting} for the $\fact$ matrix for kernel machines, which can be used to simplify the implementation of FACT-based RFM.
\begin{proof}
We compute the $\fact$ for the estimator $\hat{f}(x) = \sum_{j=1}^n K_W(x,x_j)\alpha_j$. Substituting the definition of $\fact$ and applying the chain rule, this is
\begin{align*}
\fact &:= -\frac{1}{n \lambda} \sum_{i=1}^n (\pd{}{x} \ell(\hat{f}(x), y_i))\mid_{x=x_i} x_i^{\top} = -\frac{1}{n \lambda} \sum_{i=1}^n (\pd{}{x} \hat{f}(x)\mid_{x=x_i})\ell'(\hat{f}(x_i), y_i)) x_i^{\top} \\
&= -\frac{1}{n \lambda} \sum_{i,j=1}^n (\pd{}{x} K_W(x,x_j)\mid_{x=x_i})\alpha_j^{\top} \ell'(\hat{f}(x_i), y_i) x_i^{\top}\,,
\end{align*}
where $\ell' \in \R^c$ denotes the derivative in the first entry. The proof concludes by noting that $\alpha_i = -\frac{1}{n\lambda} \ell'(\hat{f}(x_i),y_i)$ because of the first-order optimality conditions for $\alpha$, proved below in Lemma~\ref{lem:helper-alpha-appendix}.
\end{proof}

\begin{lemma}[Alternative expression for representer coefficients for kernel regression]\label{lem:helper-alpha-appendix}
Let $(X,Y)$ be training data, and let $\alpha = (K(X,X) + \lambda I)^{-1} Y$ for some $\lambda > 0$. Also let $\ell(\hat{y},y) = \frac{1}{2} \|\hat{y} - y\|^2$. Then
\begin{align*}
\alpha_i = -\frac{1}{n\lambda} \ell'(\hat{y}_i,y_i),
\end{align*}
where $\hat{y}_i = K(x_i,x) \alpha$, and the derivative $\ell'$ is in the first coordinate.
\end{lemma}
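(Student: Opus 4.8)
The plan is a one-step computation directly from the closed-form definition of the ridge-regression coefficients $\alpha$, using nothing about the kernel beyond the invertibility of $K(X,X)+n\lambda I$ for $\lambda>0$. (Throughout I would use the regularizer $n\lambda I$ as in Algorithm~\ref{alg:rfm}, which is the convention under which this lemma is invoked in the proof of Lemma~\ref{lem:lgop-rewriting}, and which is what produces the constant $\tfrac{1}{n\lambda}$ in the statement.) First, since $\ell(\hat y,y)=\tfrac12\|\hat y-y\|^2$, its gradient in the first argument is $\ell'(\hat y,y)=\hat y-y$, so the claimed identity $\alpha_i=-\tfrac{1}{n\lambda}\ell'(\hat y_i,y_i)$ is equivalent to $n\lambda\,\alpha_i=y_i-\hat y_i$ for every $i$, i.e.\ (collecting over $i$) to $n\lambda\,\alpha=Y-\hat Y$, where $\hat Y$ is the vector of predictions with $i$-th entry $\hat y_i$.

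Second, I would unpack the two definitions. By construction $\hat y_i=K(x_i,X)\alpha$, so $\hat Y=K(X,X)\alpha$; and $\alpha=(K(X,X)+n\lambda I)^{-1}Y$ means $(K(X,X)+n\lambda I)\alpha=Y$. Rearranging the latter gives $n\lambda\,\alpha=Y-K(X,X)\alpha=Y-\hat Y$, which is precisely the identity from the previous paragraph, and the lemma follows. Equivalently, one may phrase this as a first-order stationarity condition: $\alpha$ minimizes $\tfrac1n\sum_{i=1}^n\ell\big((K(X,X)\alpha)_i,y_i\big)+\tfrac{\lambda}{2}\,\alpha^\top K(X,X)\alpha$, whose gradient $\tfrac1n K(X,X)\,\ell'(\hat Y,Y)+\lambda K(X,X)\alpha$ vanishes; cancelling the common factor $K(X,X)$ (valid when $K(X,X)$ is invertible, or after passing to its range) yields the same conclusion. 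I would present the algebraic version as the main argument since it needs no nondegeneracy assumption on $K(X,X)$.

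\textbf{Main obstacle.} There is essentially no difficulty here beyond bookkeeping: the result is the defining linear system of kernel ridge regression rewritten. The only points to be careful about are (i) matching the $n\lambda I$ ridge term of Algorithm~\ref{alg:rfm} so the constant $\tfrac{1}{n\lambda}$ (and hence the constant in Lemma~\ref{lem:lgop-rewriting}) comes out correctly, and (ii) keeping the vector-valued case $c>1$ straight, so that $\ell'(\hat y_i,y_i)$ and $\alpha_i$ are consistently read as elements of $\R^c$.
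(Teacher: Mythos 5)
Your proof is correct and takes essentially the same approach as the paper: both reduce to the observation that $\ell'(\hat y_i,y_i)=\hat y_i-y_i$ and then verify $n\lambda\,\alpha = Y - K(X,X)\alpha$ by elementary algebra from the closed-form of $\alpha$. You also correctly flag the typo in the lemma statement (the ridge term should be $n\lambda I$, not $\lambda I$, to match Algorithm~\ref{alg:rfm} and produce the stated $\tfrac{1}{n\lambda}$ constant), a discrepancy the paper's own proof silently corrects.
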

\begin{proof}
Notice that $\ell'(\hat{y}_i,y_i) = \hat{y}_i - y_i$. So
\begin{align*}
\ell'(\hat{y}_i,y_i) &= [K \alpha]_{i,*} - y_i \\
&= K(K+n\lambda I)^{-1} y_i - y_i \\
&= -n\lambda (K+\lambda I)^{-1} y_i \\
&= -n\lambda \alpha_i\,.
\end{align*}
\end{proof}

\begin{remark}
A statement of this form relating the representer coefficients to the loss derivatives at optimality is more generally true beyond the MSE loss, but we do not need it here.
\end{remark}

Finally, we can prove Proposition~\ref{prop:simplified-agop-lgop}.
\begin{proposition}[Restatement of Proposition~\ref{prop:simplified-agop-lgop}]
Suppose the kernel is an inner-product kernel of the form $K_W(x,x') = k(x^{\top} M x')$, where $M = W^{\top} W$. Then, we may write the $\agop$ and the $\fact$ matrices explicitly as:
\begin{align*}%
\agop &= \sum_{i,j=1}^n \tau(x_i,x_j,M) M x_i \alpha_i^{\top} \alpha_j x_j^{\top} M^{\top}\,, \\
\fact \cdot M^{\top} &= \sum_{i,j=1}^n k'(x_i^{\top} M x_j) M x_i \alpha_i^{\top} \alpha_j x_j^{\top} M^{\top}\,, 
\end{align*}
where $\tau(x_i,M,x_j) := \frac{1}{n} \sum_{l=1}^n k'(x_l^{\top} M x_i) k'(x_l^{\top} M x_j)$.
\end{proposition}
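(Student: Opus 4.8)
The plan is to substitute the inner-product kernel $K_W(x,x') = k(x^\top M x')$ directly into two expressions that are already available: the simplified formula for $\fact$ from Lemma~\ref{lem:lgop-rewriting}, and the definition of $\agop$ from \eqref{eq:rfm-update-nfa}. The only genuine computation is the gradient of the kernel in its first argument. By the chain rule, $\frac{\partial}{\partial x} K_W(x,x_j) = k'(x^\top M x_j)\, M x_j$, so evaluated at $x = x_i$ this is $k'(x_i^\top M x_j)\, M x_j \in \R^d$. Throughout I would freely use that $M = W^\top W$ is symmetric, so $M^\top = M$; the statement keeps $M^\top$ only so that the two displayed identities look visually parallel, and also so that $x_i^\top M x_j = x_j^\top M x_i$.

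For the $\fact$ identity, I would start from Lemma~\ref{lem:lgop-rewriting}, which gives $\fact = \sum_{i,j=1}^n \big(\tfrac{\partial}{\partial x} K_W(x,x_j)\mid_{x=x_i}\big)\, \alpha_j^\top \alpha_i\, x_i^\top$. Plugging in the gradient above yields $\fact = \sum_{i,j=1}^n k'(x_i^\top M x_j)\, M x_j\, \alpha_j^\top \alpha_i\, x_i^\top$. Right-multiplying by $M^\top$ and then relabeling the summation indices $i \leftrightarrow j$ (together with $x_j^\top M x_i = x_i^\top M x_j$) turns this into exactly $\fact\cdot M^\top = \sum_{i,j=1}^n k'(x_i^\top M x_j)\, M x_i\, \alpha_i^\top \alpha_j\, x_j^\top M^\top$, as claimed. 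Note this half did not actually need the kernel to be inner-product beyond the explicit gradient step, since Lemma~\ref{lem:lgop-rewriting} is general.

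For the $\agop$ identity, recall $\agop = \tfrac1n \sum_{l=1}^n (\nabla_x \hat f_l)(\nabla_x \hat f_l)^\top$ with estimator $\hat f(x) = \sum_{j=1}^n K_W(x,x_j)\alpha_j = \sum_{j=1}^n k(x^\top M x_j)\alpha_j$. Differentiating gives $\nabla_x \hat f(x)\mid_{x = x_l} = \sum_{i=1}^n k'(x_l^\top M x_i)\, M x_i\, \alpha_i^\top \in \R^{d\times c}$. Forming the outer product and expanding the two inner sums produces $\sum_{i,j=1}^n k'(x_l^\top M x_i)\, k'(x_l^\top M x_j)\, M x_i\, \alpha_i^\top \alpha_j\, x_j^\top M^\top$; averaging over $l$ and moving the average inside the $(i,j)$-sum collapses the $l$-dependent factors into $\tau(x_i,x_j,M) = \tfrac1n\sum_{l=1}^n k'(x_l^\top M x_i)\,k'(x_l^\top M x_j)$, which is the stated formula.

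I do not expect a real obstacle here; the proof is bookkeeping. The one place to be careful is the shape of the objects: $\alpha_i \in \R^c$, so $\alpha_i^\top \alpha_j$ is a scalar, while $\nabla_x \hat f_l$ is a $d\times c$ matrix, so the ordering of factors in the outer product $(\nabla_x \hat f_l)(\nabla_x \hat f_l)^\top$ must be tracked to land the $M x_i$ on the left and the $x_j^\top M^\top$ on the right. Everything else follows from the chain rule and symmetry of $M$.
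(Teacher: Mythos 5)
Your proposal is correct and follows essentially the same route as the paper: for $\fact$ you specialize Lemma~\ref{lem:lgop-rewriting} to the inner-product kernel and right-multiply by $M^{\top}$, and for $\agop$ you differentiate the kernel expansion of $\hat f$, form the outer product, and absorb the average over the sampling index into $\tau$. The only cosmetic difference is that you explicitly flag the $i\leftrightarrow j$ relabeling (and symmetry of $M$) in the $\fact$ step, which the paper leaves implicit.
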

\begin{proof}
The expressions can be derived by plugging in the expansion $\hat{f}(x) = \sum_{j=1}^n K(x,x_i)\alpha_i$.

For $\agop$, we start from its expression in Ansatz~\eqref{eq:nfa}, and obtain
\begin{align*}
\agop &= \sum_{i=1}^n (\nabla_x\sum_{j=1}^n K_W(x,x_j) \alpha_j)(\nabla_x \sum_{l=1}^n K_W(x,x_l) \alpha_l)^{\top} \\
&= \sum_{i,j,l=1}^n k'(x_j^{\top} M x_i) k'(x_l^{\top} x_i) (Mx_j \alpha_j^{\top})(M x_l \alpha_l^{\top})^{\top} \\
&= \sum_{i,j=1}^n \tau(x_i,M,x_j) Mx_i \alpha_i^{\top} \alpha_j x_j^{\top}M^{\top}.
\end{align*}

For $\fact$, we start from the expression in Lemma~\ref{lem:lgop-rewriting}:
\begin{align*}
\fact \cdot M^{\top} &= \sum_{i,j=1}^n (\pd{}{x} K_W(x,x_j) \mid_{x = x_i})\alpha_j^{\top} \alpha_i x_i^{\top} M^{\top} \\
&= \sum_{i,j=1}^n k'(x_i^{\top} M x_j) Mx_j\alpha_j^{\top} \alpha_i x_i^{\top} M^{\top}\,.
\end{align*}
\end{proof}

\section{Experimental resource requirements}

The following timings are for one A40 48GB GPU. The tabular data benchmark experiments in Table~\ref{tab:uci_tabular_benchmark} take under 1 GPU-hour to run. The synthetic benchmark task of Figure~\ref{fig:nonlinear} on which $\fact$ and $\nfa$ are uncorrelated takes under 1 GPU-hour to run. The arithmetic experiments in Figure~\ref{fig:agop_wagop_grokking} and \ref{fig:factnfasame} take under 1 GPU-hour to run. The ReLU MLP experiments on MNIST and CIFAR-10 in 
Figures \ref{fig:5layer_depth_corr}, \ref{fig:fact_agop_vs_epochs}, \ref{fig:5layer_depth_corr_backward},  and \ref{fig:backward_over_epochs} take under 50 GPU-hours to run. The sparse parity experiments in Figure~\ref{fig:leader-parities} and Figure~\ref{fig:sparse_parity_rfm} take under 1 GPU-hour to run. The deep linear network experiments in Figure~\ref{fig:deep_linear_nfp} take under 2 GPU-hours to run. Additionally, debugging code and tuning hyperparameters took under 200 GPU-hours to run.

\bibliography{bibliography}
\bibliographystyle{abbrv}

\end{document}